\documentclass{article}

\usepackage[preprint]{neurips_2026}

\usepackage[utf8]{inputenc} % allow utf-8 input
\usepackage[T1]{fontenc}    % use 8-bit T1 fonts
\usepackage{hyperref}       % hyperlinks
\usepackage{url}            % simple URL typesetting
\usepackage{booktabs}       % professional-quality tables
\usepackage{amsfonts}       % blackboard math symbols
\usepackage{nicefrac}       % compact symbols for 1/2, etc.
\usepackage{microtype}      % microtypography
\usepackage{xcolor}         % colors

\usepackage{subcaption}
\usepackage{multirow}
\usepackage{amsmath}
\usepackage{amssymb}
\usepackage{mathtools}
\usepackage{amsthm}

\usepackage[capitalize,noabbrev]{cleveref}

\theoremstyle{plain}
\newtheorem{theorem}{Theorem}[section]

\newtheorem{lemma}[theorem]{Lemma}
\newtheorem{corollary}[theorem]{Corollary}
\theoremstyle{definition}
\newtheorem{definition}[theorem]{Definition}

\theoremstyle{remark}
\newtheorem{remark}[theorem]{Remark}

\title{Decoupling Motion and Geometry in 4D Gaussian Splatting}

\author{%
Yi Zhang\textsuperscript{1}\thanks{Equal contribution.} \quad
Yulei Kang\textsuperscript{1}\footnotemark[1] \quad
Jiangxin Sun\textsuperscript{2} \quad
Beihao Xia\textsuperscript{3} \quad
Jisheng Dang\textsuperscript{4} \quad
Jian-Fang Hu\textsuperscript{1}\thanks{Corresponding author.} \\
\textsuperscript{1} Sun Yat-sen University \quad 
\textsuperscript{2} University of Trento \\
\textsuperscript{3} Huazhong University of Science and Technology \quad 
\textsuperscript{4} Lanzhou University \\
}

\begin{document}

\maketitle

\begin{abstract}
  High-fidelity reconstruction of dynamic scenes is an important yet challenging problem. While recent 4D Gaussian Splatting (4DGS) has demonstrated the ability to model temporal dynamics, it couples Gaussian motion and geometric attributes within a single covariance formulation, which limits its expressiveness for complex motions and often leads to visual artifacts. To address this, we propose VeGaS, a novel velocity-based 4D Gaussian Splatting framework that decouples Gaussian motion and geometry. Specifically, we introduce a Galilean shearing matrix that explicitly incorporates time-varying velocity to flexibly model complex non-linear motions, while strictly isolating the effects of Gaussian motion from the geometry-related conditional Gaussian covariance. Furthermore, a Geometric Deformation Network is introduced to refine Gaussian shapes and orientations using spatio-temporal context and velocity cues, enhancing temporal geometric modeling. Extensive experiments on public datasets demonstrate that VeGaS achieves state-of-the-art performance.
\end{abstract}

\section{Introduction}

Photorealistic modeling of real-world dynamic scenes aims to synthesize images at arbitrary viewpoints and time instants. It remains a fundamental challenge in computer vision and machine learning, with broad applications in VR/AR, immersive gaming, and cinematic production. The difficulty stems from the diversity of real-world dynamics, ranging from rigid-body motion, where geometry should remain invariant under transformations, to non-rigid deformations, where both motion and geometry evolve over time under distinct physical constraints.

Neural Radiance Fields (NeRF) \cite{mildenhall2021nerf} and its variants \cite{chen2022tensorf, sun2022direct, hu2022efficientnerf, muller2022instant, fridovich2022plenoxels, takikawa2021neural, xu2022point} pioneered implicit functional representations for reconstructing complex scenes. More recently, 3D Gaussian Splatting (3DGS) \cite{kerbl20233d} has emerged as a compelling alternative that achieves real-time rendering by explicitly modeling scene as a set of discrete 3D Gaussian primitives. Despite their success, both paradigms are largely tailored to static scenes and lack native mechanisms for representing temporal dynamics.

To address this, 4DGaussians \cite{wu20244d} introduce a deformation field network to jointly learn the position offsets and geometric deformations of Gaussian points at each timestamp. 4DGS and its variants \cite{yang2023real, gao20257dgs} further improved the modeling quality by proposing a 4D Gaussian Splatting representation that integrates temporal dynamics into the 3D Gaussian primitives. This representation computes both the spatial position and geometry attributes (i.e., shape and orientation) of a Gaussian at a given timestamp through the conditional distribution of the 4D Gaussian covariance, yielding a constant-velocity linear motion and a time-independent geometric model. However, the time-invariant properties of its velocity and geometric representation limit its ability to capture complex non-linear motion and expressiveness during inference. Furthermore, due to the Gaussian covariance modeling approach, the optimization of Gaussian motion and geometric parameters becomes coupled. This affects geometric modeling during complex motion fitting, making the model prone to artifacts, as illustrated in Fig.~\ref{intro}.

\begin{figure}[t]
\centering
\includegraphics[width=\linewidth]{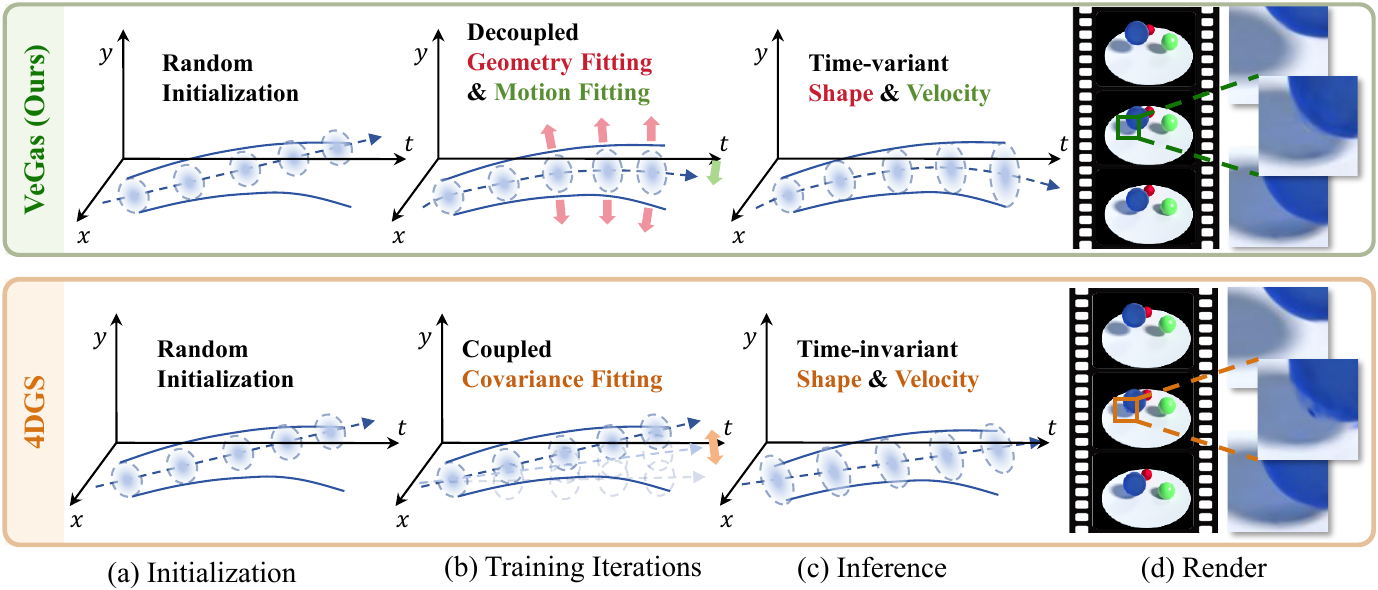}
\caption{
Graphic illustration of the proposed VeGaS in comparison with 4DGS. (a) Both methods share the same random initialization. (b) During training, VeGaS fits trajectories (as illustrated by the blue curves) using a decoupled motion and geometry formulation, whereas 4DGS adopts a coupled modeling scheme.
(c) At inference, VeGaS employs time-varying velocity and geometry to capture complex trajectories and deformations, while 4DGS assumes constant velocity and time-invariant geometry.
(d) Qualitative results show that VeGaS yields higher rendering fidelity, whereas 4DGS exhibits noticeable artifacts.
}
\label{intro}
\vspace{-5mm}
\end{figure}

In this paper, we present VeGaS (\textbf{Ve}locity-based Decoupling of Motion and Geometry in 4D \textbf{Ga}ussian \textbf{S}platting), a novel framework that decouples motion and geometry in 4D Gaussian Splatting to enhance dynamic scene rendering. Drawing inspiration from Galilean transformations, we propose a Galilean shearing matrix incorporating time-varying velocity for flexible modeling of complex non-linear motions, which naturally integrates into the original Gaussian covariance through congruence transformation. The transformed covariance inherently decouples the effects of Gaussian motion and geometric modeling, preventing interference with geometric modeling when fitting complex trajectories. Additionally, we present a lightweight network that independently refines the shape and orientation of the Gaussians over time. By decoupling the optimization of motion and geometry, our framework overcomes the limitations of previous approaches, providing a more expressive and reliable solution for dynamic scene reconstruction. Our contributions are summarized as follows:
\begin{itemize}
\item We propose a novel decoupled motion and geometry framework VeGaS which effectively addresses the artifact issues arising from covariance coupling in 4D Gaussian Splatting modeling.
\item We introduce a novel Gaussian motion modeling approach by incorporating time-variant velocity into the 4DGS representation and propose a deformation network to model time-varying Gaussian geometry, enhancing 4DGS expressiveness.
\item We conduct extensive experiments on public datasets, demonstrating that our method consistently achieves state-of-the-art results in both visual quality and quantitative performance.
\end{itemize}

\section{Related Works}

\textbf{Static Scene Novel View Synthesis.} 
Novel view synthesis is a crucial and challenging task in machine learning and computer vision. Neural Radiance Fields (NeRF) \cite{mildenhall2021nerf}, a pioneering work in this field, introduces an implicit scene representation that models color and density using multilayer perceptrons (MLP), delivering high-quality rendering results. Subsequently, numerous research has emerged aimed at improving the training and rendering efficiency of vanilla NeRF, employing techniques such as compact data representations \cite{chen2022tensorf, sun2022direct, hu2022efficientnerf, muller2022instant, fridovich2022plenoxels, takikawa2021neural, xu2022point}, or compressing neural networks \cite{gordon2023quantizing, reiser2021kilonerf}. Other works \cite{barron2021mip, barron2022mip, barron2023zip, verbin2024ref} focus on improving vanilla NeRF by mitigating aliasing artifacts or enhancing surface reflections, thereby boosting overall rendering performance. Recently, 3D Gaussian Splatting (3DGS) \cite{kerbl20233d} has made significant advances in scene reconstruction due to its fast rendering. This explicit modeling approach offers greater flexibility and controllability, motivating further research \cite{zhu2025rethinking, ye2024gaussian, guo2024semantic} to explore the application of 3DGS in areas such as 3D semantic segmentation and scene editing.

\textbf{Dynamic Scene Novel View Synthesis.}
Generating novel views of a dynamic scene from a series of captured 2D images presents a greater challenge compared to static scenes. Many works have extended NeRF-based methods to dynamic scenes. Some methods \cite{pumarola2021d, park2021nerfies, park2021hypernerf} incorporate time as a conditioning variable and learn deformation fields that warp points from a canonical space to their corresponding positions at each time step. NeRFPlayer \cite{song2023nerfplayer} further decomposes the 4D spatiotemporal space into static, deforming, and new regions, and introduces a hybrid feature streaming scheme for efficient neural field modeling. To improve efficiency, HexPlane \cite{cao2023hexplane} explicitly represents dynamic scenes using six learned feature planes, significantly reducing training time. Despite these advances, achieving real-time rendering with NeRF for scenes involving complex dynamics and view-dependent effects remains a significant challenge. Consequently, several recent works \cite{luiten2024dynamic, huang2024sc, lin2024gaussian, lu20243d, yang2024deformable} have explored extending 3DGS to dynamic scenes. Wu et al. \cite{wu20244d} use a deformation field network to capture the Gaussian deformations in position, rotation, and scaling, enabling accurate Gaussian transformations over time. 4DGS \cite{yang2023real} incorporates temporal dynamics by extending the Gaussians to a 4D representation that combines spatial and temporal dimensions. Although these Gaussian-based methods effectively capture the temporal dynamics in scene modeling, they do not comprehensively account for the variable motion dynamics of Gaussian points.

\section{Method}

\subsection{Preliminary}
\textbf{3D Gaussian Splatting.} 
3D Gaussian Splatting \cite{kerbl20233d} explicitly renders static scenes using a set of anisotropic 3D Gaussian distributions. Each Gaussian primitive is defined by its center position $\mathbf{\mu} \in \mathbb{R}^3$ and covariance matrix $\Sigma \in \mathbb{R}^{3 \times 3}$. To ensure the semi-positive definiteness of the covariance matrix and simplify the optimization process, the covariance matrix $\mathbf{\Sigma}$ is decomposed into a rotation matrix $\mathbf{R}$ and a scaling matrix $\mathbf{S}= \mathrm{diag}(s_x, s_y, s_z)$:
\begin{equation}
\label{equation_1}
\mathbf{\Sigma} = \mathbf{R} \mathbf{S} \mathbf{S}^T \mathbf{R}^T.
\end{equation}
In the 3D Gaussian representation, a set of spherical harmonic ($SH$) coefficients are also employed to represent view-dependent color, along with an opacity $\mathbf{\alpha} \in [0,1]$.

\textbf{4D Gaussian Splatting.} To render dynamic scenes, 4D Gaussian Splatting (4DGS) \cite{yang2023real} reformulate the center position, covariance matrix, and Gaussian color in 3D Gaussian as follows: (1) Extending Gaussian center position with temporal position as $\mathbf{\mu} = (\mathbf{\mu}_x, \mathbf{\mu}_y, \mathbf{\mu}_z, \mathbf{\mu}_t) \in \mathbb{R}^4$; (2) Re-formulating Covariance matrix as $\mathbf{\Sigma} = \mathbf{R} \mathbf{S} \mathbf{S}^T \mathbf{R}^T\in \mathbb{R}^{4 \times 4}$, where $\mathbf{S}= diag(s_x,s_y,s_z,s_t)$ is a scaling matrix and $\mathbf{R}\in \mathbb{R}^{4 \times 4}$ is a rotation matrix; (3) Representing the Gaussian color by spherical harmonic coefficients $\mathbf{h} \in \mathbb{R}^{3{(k_v+1)^2(k_t+1)}}$, where $k_v$ is the view degrees of freedom, and $k_t$ indicates the time degree of freedom. At each time step, the 4D Gaussian induces a conditional 3D Gaussian distribution, which can be expressed as:
\begin{equation}
\label{equation_2}
\begin{aligned}
\mathbf{\mu} _{xyz \mid t} &= \mu _{1:3} +(t - \mu_t) \mathbf{\Sigma}_{1:3,4} {\mathbf{\Sigma}_{4,4}}^{-1} , \\
\mathbf{\Sigma}_{xyz \mid t} &= \mathbf{\Sigma}_{1:3,1:3} - \mathbf{\Sigma}_{1:3,4} {\mathbf{\Sigma}_{4,4}}^{-1} \mathbf{\Sigma}_{4,1:3},
\end{aligned}
\end{equation}
where the 3D center $\mu_{xyz \mid t}$ increases linearly over time with a constant velocity of $\mathbf{\Sigma}_{1:3,4}\mathbf{\Sigma}_{4,4}^{-1}$. The induced 3D covariance $\mathbf{\Sigma}_{xyz \mid t}$ is time-invariant, indicating that the 4D Gaussian’s spatial shape and orientation do not depend on the temporal variable. 

Although 4DGS supports dynamic scene reconstruction, its constant-velocity motion assumption and time-invariant geometric attributes (i.e., geometry is independent of time) together limit its capacity to represent complex motions and time-varying geometric deformations. Moreover, $\mathbf{\Sigma}$ conflates geometric structure and motion dynamics into a single parameterization, coupling their updates during optimization rather than allowing them to be tuned independently. As a result, the model struggles to disentangle the temporal evolution of geometry and motion, which can degrade reconstruction quality and introduce artifacts.

\begin{figure}[htbp]
\centering
\includegraphics[width=\textwidth]{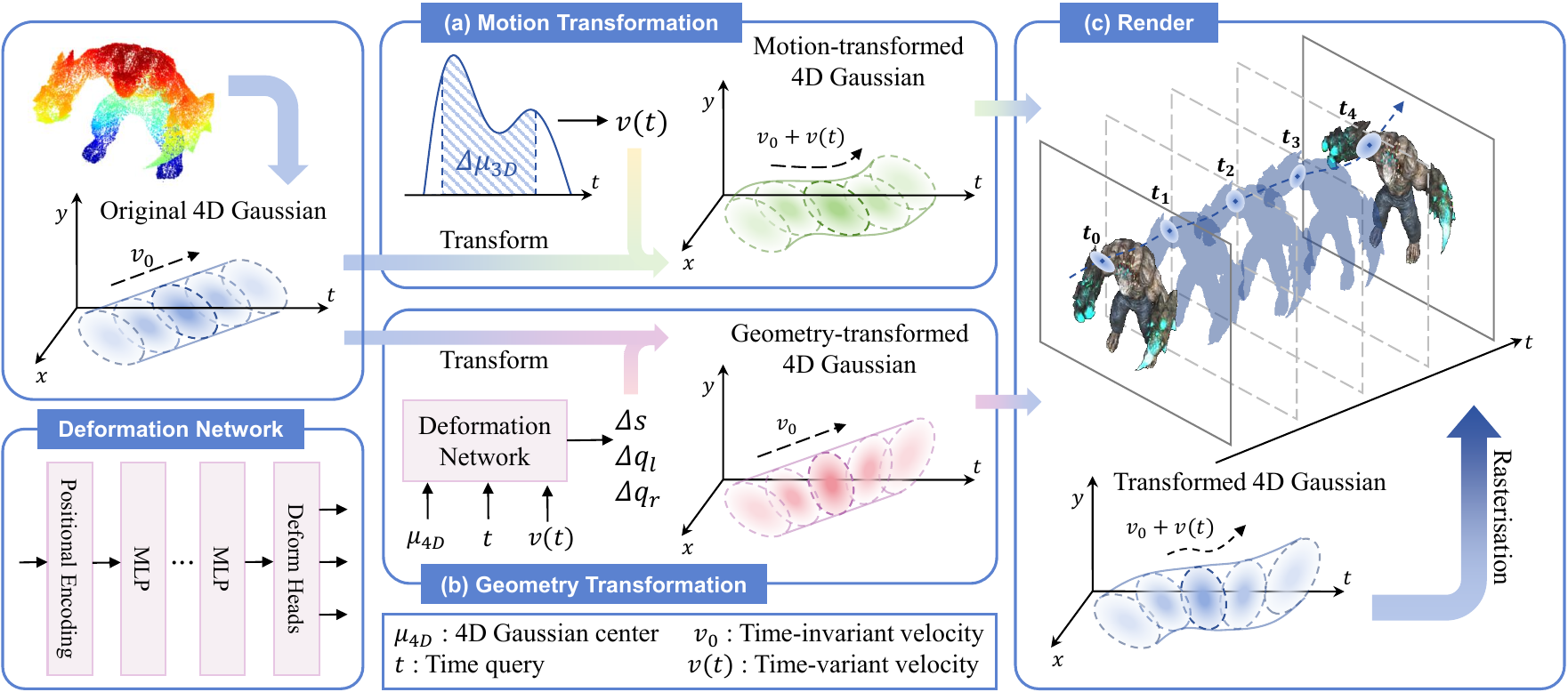}
\caption{
Overview of our proposed velocity-based decoupling of motion and geometry for 4D Gaussian Splatting. (a) Original velocity $v_0$ is transformed using the time-varying velocity $v(t)$ calculated from the shearing matrix, enabling original 4D Gaussians to move along non-linear trajectories in continuous time. (b) The deformation network predicts geometric transformations of Gaussians at any time based on velocity information, time query $t$, and 4D Gaussian center $\mu_{4D}$. (c) Combining the velocity and geometry transformations, the rendered images are obtained through differentiable rasterization of transformed Gaussians at each frame.
}
\label{primary}
\vspace{-6mm}
\end{figure}

\subsection{Motion-Geometric Decoupled Representation}
%This decoupling prevents the temporal evolution process from conflating complex motion flows with local shape deformations, resulting in more robust synthesis.
To overcome the limitations of 4DGS, a natural and intuitive solution is to introduce a time-varying velocity that enables non-linear trajectories, thereby enhancing the flexibility of 4D Gaussians in representing complex dynamics. Crucially, this motion enhancement must be strictly decoupled from geometric modeling, such that the velocity influences only the spatial position while preserving the intrinsic 3D shape and orientation of each Gaussian. We achieve this by introducing a motion–geometry decoupled representation, whose formulation is grounded in a Galilean shearing analysis.

\textbf{Theoretical Analysis.}
To model the motion of Gaussian points over time, we draw inspiration from the Galilean transformation in classical mechanics, using a spatiotemporal shearing operation to drag points along their trajectories.

\begin{definition}
[Galilean Shearing]
\label{def:galilean_shearing}
In a 4D spatio-temporal continuum $(x, y, z, t)$, any linear transformation that imparts a constant velocity $ \mathbf{v} \in \mathbb{R}^3$ to a point while preserving the absolute temporal coordinate ($t'=t$) is equivalent to a Galilean transformation. This is represented by a shearing matrix $\mathbf{V}$ acting on the 4D coordinates:
\begin{equation}
\begin{pmatrix} \mathbf{x}' \\ t' \end{pmatrix} =  \mathbf{V} \begin{pmatrix} \mathbf{x} \\ t \end{pmatrix} = \begin{pmatrix} \mathbf{I}_3 & \mathbf{v} \\ \mathbf{0} & 1 \end{pmatrix} \begin{pmatrix} \mathbf{x} \\ t \end{pmatrix},
\end{equation}
where $\mathbf{I}_3$ denotes the $3\times3$ identity matrix.
\end{definition}

\begin{remark}
Physically, matrix $\mathbf{V}$ modulates trajectories by mapping the static temporal axis to a slanted trajectory in spacetime, where the slope corresponds to $\mathbf{v}$. Since $\det(\mathbf{V})=1$, this transformation preserves the 4D Gaussian volume and conserves the total probability mass of the Gaussian density function, ensuring the feasibility of applying the shear transformation to the Gaussian representation.
\end{remark}

While the above definition provides a method to introduce velocity through shearing, it is essential to demonstrate that such a 4D transformation does not distort the rendered 3D geometry at any given time. The intrinsic 3D geometry of a 4D Gaussian at any time instance $t$ is determined by its conditional distribution $P(\mathbf{x}|t)$. The covariance of this distribution, which characterizes the 3D ellipsoid's shape and orientation, is given by the Schur complement of the temporal block within the joint 4D covariance matrix $\mathbf{\Sigma}$. This allows us to prove the stability of the 3D geometry using the principle of Schur complement stability.

\begin{theorem}
[Schur Complement Invariance]
\label{thm:schur_invariance}
Let $\mathbf{\Sigma} \in \mathbb{R}^{4 \times 4}$ be a symmetric positive semi-definite covariance matrix of a 4D Gaussian, and let $\mathbf{\Sigma}' = \mathbf{V}\mathbf{\Sigma}\mathbf{V}^\top$ be the congruence transformation induced by the shearing matrix $\mathbf{V}$. Denoting by $\text{Schur}_{4,4}(\cdot)$, the Schur complement with respect to the temporal dimension, the following invariance holds:
\begin{equation}
\text{Schur}_{4,4}(\mathbf{\Sigma}') = \text{Schur}_{4,4}(\mathbf{\Sigma}).
\end{equation}
\end{theorem}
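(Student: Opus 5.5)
The plan is to write $\mathbf{\Sigma}$ in block form and compute $\mathbf{V}\mathbf{\Sigma}\mathbf{V}^\top$ explicitly, then evaluate the Schur complement directly. Write
\begin{equation*}
\mathbf{\Sigma} = \begin{pmatrix} \mathbf{A} & \mathbf{b} \\ \mathbf{b}^\top & c \end{pmatrix},
\end{equation*}
where $\mathbf{A} = \mathbf{\Sigma}_{1:3,1:3}$, $\mathbf{b} = \mathbf{\Sigma}_{1:3,4}$ and $c = \mathbf{\Sigma}_{4,4}$. As in the conditional formulas of Eq.~\eqref{equation_2}, $\text{Schur}_{4,4}(\mathbf{\Sigma}) = \mathbf{A} - \mathbf{b}c^{-1}\mathbf{b}^\top$. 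We assume $c>0$, which is needed for the Schur complement and the conditional distribution to be defined at all; if the semi-definite case with $c=0$ is wanted, $c^{-1}$ would be replaced by a pseudo-inverse.

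Multiplying out with $\mathbf{V} = \begin{pmatrix} \mathbf{I}_3 & \mathbf{v} \\ \mathbf{0} & 1\end{pmatrix}$ gives
\begin{equation*}
\mathbf{V}\mathbf{\Sigma} = \begin{pmatrix} \mathbf{A} + \mathbf{v}\mathbf{b}^\top & \mathbf{b} + c\mathbf{v} \\ \mathbf{b}^\top & c \end{pmatrix}
\end{equation*}
and then
\begin{equation*}
\mathbf{\Sigma}' = \mathbf{V}\mathbf{\Sigma}\mathbf{V}^\top = \begin{pmatrix} \mathbf{A} + \mathbf{v}\mathbf{b}^\top + \mathbf{b}\mathbf{v}^\top + c\,\mathbf{v}\mathbf{v}^\top & \mathbf{b} + c\mathbf{v} \\ (\mathbf{b}+c\mathbf{v})^\top & c \end{pmatrix}.
\end{equation*}
The first observation is that the temporal entry is unchanged, $\mathbf{\Sigma}'_{4,4} = c$. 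So the Schur complement of $\mathbf{\Sigma}'$ is well defined exactly when that of $\mathbf{\Sigma}$ is. This is also where the requirement $t'=t$ in Definition~\ref{def:galilean_shearing} enters: the last row of $\mathbf{V}$ is $(\mathbf{0},1)$.

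Next compute
\begin{equation*}
\text{Schur}_{4,4}(\mathbf{\Sigma}') = \mathbf{A} + \mathbf{v}\mathbf{b}^\top + \mathbf{b}\mathbf{v}^\top + c\,\mathbf{v}\mathbf{v}^\top - c^{-1}(\mathbf{b}+c\mathbf{v})(\mathbf{b}+c\mathbf{v})^\top .
\end{equation*}
Expanding the last term gives $c^{-1}\mathbf{b}\mathbf{b}^\top + \mathbf{b}\mathbf{v}^\top + \mathbf{v}\mathbf{b}^\top + c\,\mathbf{v}\mathbf{v}^\top$. The three velocity-dependent terms cancel, leaving $\mathbf{A} - c^{-1}\mathbf{b}\mathbf{b}^\top = \text{Schur}_{4,4}(\mathbf{\Sigma})$.

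There is also a conceptual cross-check that avoids the expansion. Let $\mathbf{L} = \begin{pmatrix}\mathbf{I}_3 & -\mathbf{b}c^{-1}\\ \mathbf{0} & 1\end{pmatrix}$. Then $\mathbf{L}\mathbf{\Sigma}\mathbf{L}^\top = \mathrm{diag}(\text{Schur}_{4,4}(\mathbf{\Sigma}), c)$. The product $\mathbf{V}\mathbf{L}^{-1}$ is again block upper unitriangular, and any such congruence of a block-diagonal matrix leaves the top-left Schur complement unchanged. Probabilistically, this says the conditional covariance of $\mathbf{x}+\mathbf{v}t$ given $t$ equals that of $\mathbf{x}$ given $t$, since $\mathbf{v}t$ is deterministic once $t$ is fixed.

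No step is genuinely hard. The only point needing care is the degenerate case $\mathbf{\Sigma}_{4,4}=0$ allowed by the positive semi-definite hypothesis. I would handle it by noting that $c$ is invariant, so the statement is understood under the same invertibility assumption used in Eq.~\eqref{equation_2}, or by reading $c^{-1}$ as a pseudo-inverse, for which the same cancellation works because $\mathbf{b}=\mathbf{0}$ whenever $c=0$ by positive semi-definiteness.
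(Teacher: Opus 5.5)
Your proposal is correct and follows the paper's proof essentially verbatim: you partition $\mathbf{\Sigma}$ into spatial and temporal blocks, expand $\mathbf{V}\mathbf{\Sigma}\mathbf{V}^\top$, note that $\Sigma'_{tt}=\Sigma_{tt}$, and observe that the velocity-dependent terms $\mathbf{v}\mathbf{b}^\top$, $\mathbf{b}\mathbf{v}^\top$ and $c\,\mathbf{v}\mathbf{v}^\top$ cancel in the Schur complement. Your factorization cross-check is a correct extra that the paper omits. So is your treatment of the degenerate case $\Sigma_{4,4}=0$: there positive semi-definiteness forces $\mathbf{b}=\mathbf{0}$, hence $\mathbf{\Sigma}'=\mathbf{\Sigma}$, whereas the paper silently assumes $\Sigma_{tt}$ is invertible despite its PSD hypothesis.
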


\cref{thm:schur_invariance} (proof details in Appendix \ref{appendix:proof_schur}) ensures that while the Gaussian leans in 4D spacetime to represent a velocity vector $\mathbf{v}$, its 3D cross-section at any given time $t$ retains the same shape, scale, and orientation as its rest state. Although the Galilean transformation is originally defined for constant velocity, its shearing formulation can be naturally extended to accommodate time-varying velocities, enabling the modeling of complex, non-linear trajectories.

\textbf{Shearing-based Motion Modeling.} Building on the above analysis, we extend the velocity shearing matrix $\mathbf{V}$ to incorporate a time-varying instantaneous velocity $\mathbf{v}(t) = (v_x, v_y, v_z)^\top$ as follows:
\begin{equation}
\mathbf{V} = \begin{pmatrix}
\mathbf{I}_3 & \mathbf{v}(t) \\
\mathbf{0} & 1
\end{pmatrix}.
\end{equation}

Applying the shearing matrix $\mathbf{V}$ to the original covariance $\mathbf{\Sigma}$, we construct the velocity-aware 4D covariance matrix $\mathbf{\Sigma}'$ via a congruence transformation, which preserves symmetry and positive semi-definiteness (proof in Appendix \ref{appendix:spsd_proof}):
\begin{equation}
\mathbf{\Sigma}' = \mathbf{V} \mathbf{R} \mathbf{S} \mathbf{S}^T \mathbf{R}^T \mathbf{V}^T = \mathbf{V} \mathbf{\Sigma} \mathbf{V}^T.
\end{equation}

By performing block-wise partitioning on the transformed covariance matrix $\mathbf{\Sigma}'$ and leveraging the properties of the multivariate Gaussian distribution, we derive the induced conditional 3D Gaussian distribution at time $t$ (detailed calculation in Appendix \ref{appendix:derivation_conditional_distribution}). The corresponding conditional mean $\mu_{xyz \mid t}'$ and covariance matrix $\mathbf{\Sigma}_{xyz \mid t}'$ are given by:
\begin{equation}
\begin{aligned}
\label{eqMiuSigVt}
\mu_{xyz \mid t}' &= \mu_{1:3} + \left( \mathbf{\Sigma}_{1:3,4}\mathbf{\Sigma}_{4,4}^{-1} + \mathbf{v}(t) \right)(t - \mu_t), \\    
\mathbf{\Sigma}_{xyz \mid t}' &= \mathbf{\Sigma}_{1:3,1:3} - \mathbf{\Sigma}_{1:3,4}\mathbf{\Sigma}_{4,4}^{-1}\mathbf{\Sigma}_{4,1:3},
\end{aligned}
\end{equation}
where the time-varying velocity $\mathbf{v}(t)$ explicitly models the non-linear motion evolution of Gaussian point trajectories over time. By comparing Eq.~(\ref{equation_2}) with Eq.~(\ref{eqMiuSigVt}), we observe that the conditional covariance $\mathbf{\Sigma}_{xyz \mid t}'$ is identical to that in the original 4DGS formulation and remains independent of $\mathbf{v}(t)$. This indicates that introducing the shearing matrix $\mathbf{V}$ affects only the trajectory of the Gaussian center, while preserving its intrinsic 3D shape and orientation, which is consistent with the theoretical analysis \cref{thm:schur_invariance}. 

\textbf{Non-linear Trajectory Integration.}
Let $\mathbf{v}_0 = \mathbf{\Sigma}_{1:3,4}\mathbf{\Sigma}_{4,4}^{-1}$ denote the time-invariant velocity component. The cumulative displacement $\Delta\mu_{3D}$ relative to the spatial mean $\mu_{1:3}$ can be calculated by integrating the total instantaneous velocity over time:
\begin{equation}
\Delta\mu_{3D} = \int_{\mu_t}^{t} (\mathbf{v}(\tau) + \mathbf{v}_0)d\tau = \int_{\mu_t}^{t} \mathbf{v}(\tau)d\tau + \mathbf{v}_0(t-\mu_t).
\end{equation}

To handle non-linear trajectories beyond simple constant velocity, we model the time-variant velocity $\mathbf{v}(\tau)$ as a continuous function parameterized by a set of $N_v$ equidistantly sampled velocity anchors across the temporal domain $T$. Each anchor is associated with a learnable velocity vector, serving as a nodal point for motion optimization. For any query time $t$, the instantaneous velocity $\mathbf{v}(t)$ is obtained via linear interpolation between the two temporally adjacent velocity anchors. 

We employ an efficient segmented numerical integration scheme based on the temporal distance between $t$ and $\mu_t$: (1) \textit{Intra-anchor Interpolation:} When $t$ and $\mu_t$ reside within the same anchor interval, the displacement is calculated as the area of a single trapezoid formed by $\mathbf{v}(t)$ and $\mathbf{v}(\mu_t)$. (2) \textit{Cross-anchor Accumulation:} For intervals spanning multiple anchors, the integral is decomposed into a left boundary trapezoid (from $\mu_t$ to the next anchor), a right boundary trapezoid (from the last anchor to $t$), and a series of interior segments representing full inter-anchor intervals.

To avoid redundant computations for the interior segments, we utilize a prefix sum of the anchor velocities. This allows the cumulative displacement of any number of full intervals to be computed in $\mathcal{O}(1)$ time. Let $t_k$ denote the timestamp of the $k$-th anchor and $m$ represent the number of complete intervals. The cumulative displacement is computed as:
\begin{equation}
\int_{t_k}^{t_{k+m}} \mathbf{v}(\tau) d\tau = \sum_{i=k}^{k+m-1} \frac{\mathbf{v}(t_{i}) + \mathbf{v}(t_{i+1})}{2} \cdot \Delta t,
\end{equation}
where $\Delta t = \frac{T}{N_v-1}$ is the constant temporal stride.

\subsection{Geometric Deformation Network}
While the shearing matrix $\mathbf{V}$ enhances the flexibility of Gaussian motion, complex dynamic scenes often involve high-frequency geometric deformations (e.g., non-rigid muscle movement or clothing wrinkles) that are more challenging to model. Unlike motion, which is modeled using a velocity-parameterized covariance, we introduce a lightweight deformation network to capture the time-varying geometric attributes of the Gaussians for more accurate deformation representation.

The network $\mathcal{F}_\theta$ takes the spatio-temporal context and a condition $t$ as input, predicting residuals for scaling, rotation, and position. To provide the network with explicit motion cues, we also incorporate the velocity. Specifically, the deformation network is formulated as:
\begin{equation}
    \Delta\mathbf{s},\, \Delta\mathbf{q},\, \Delta\mathbf{q}_r = \mathcal{F}_\theta\big( \gamma(\mathbf{\mu_{3D}}),\, \gamma(\mu_t),\, \gamma(t_q),\, \gamma(\mathcal{V}) \big),
\end{equation}
where $\mathbf{\mu_{3D}} \in \mathbb{R}^3$ denotes the canonical 3D Gaussian center, $\mu_t \in \mathbb{R}$ is the Gaussian temporal mean, $t_q \in \mathbb{R}$ is the target query time, and $\mathcal{V} \in \mathbb{R}^{N_v \times 3}$ represents the motion velocity feature, formed by concatenating the velocity vectors from the $N_v$ anchors. To capture high-frequency variations, each input is mapped to a higher-dimensional space via positional encoding $\gamma(\cdot)$:
\begin{equation}
    \gamma(\mathbf{x}) = \big[ \mathbf{x},\; \sin(2^i \mathbf{x}),\; \cos(2^i \mathbf{x}) \big]_{i=0}^{L_{\mathbf{x}}-1},
\end{equation}
where $L_{\mathbf{x}}$ denotes the feature-specific frequency bands. The encoded features are concatenated and fed into the MLP-based blocks of $\mathcal{F}_\theta$, which comprises successive linear layers followed by Layer Normalization and ReLU activations. The network outputs residuals $\Delta\mathbf{s} \in \mathbb{R}^4$ for scaling, and $\Delta\mathbf{q}, \Delta\mathbf{q}_r \in \mathbb{R}^4$ as quaternion-based rotation. The deformed attributes are then updated as:
\begin{equation}
\mathbf{S}' = \mathbf{S} +  diag(\Delta\mathbf{s}), 
\mathbf{q}' = \mathbf{q} \otimes \Delta\mathbf{q}, 
\mathbf{q}_r' = \mathbf{q}_r \otimes \Delta\mathbf{q}_r,
\end{equation}
where $\otimes$ denotes the quaternion multiplication. We then follow \cite{yang2023real} and employ a dual-quaternion calculation strategy to construct rotations in 4D space. More specifically, the final 4D rotation $\mathbf{R}'$ is then constructed from the refined quaternions $\mathbf{q}'$ and $\mathbf{q}_r'$ as $\mathbf{R}' = \mathbf{q}' \mathbf{q}_r'$. %Finally, the covariance that encodes the geometries can be calculated as $\mathbf{R}' \mathbf{S}' \mathbf{S}'^T \mathbf{R}'^T$.

\subsection{Loss Function}
Our system is optimized by minimizing the following loss:
\begin{equation}
L = \lambda L_{L1} + (1 - \lambda)L_{\mathrm{D-SSIM}},
\end{equation}
where $L_{L1}=\| \mathbf{I} - \mathbf{I}_{\mathrm{gt}} \|_{L_1}$ is employed to quantify the pixel-wise difference between the ground truth image $\mathbf{I}_{\mathrm{gt}}$ and the corresponding rendering $\mathbf{I}$. $L_{\mathrm{D-SSIM}}=1 - \mathrm{SSIM}(\mathbf{I}, \mathbf{I}_{\mathrm{gt}})$ evaluates the perceptual quality of the rendered image relative to the ground truth.

\section{Experiments}

\subsection{Experimental Setup}
% \textbf{Datasets.} We evaluate our method on two representative benchmarks: (1) \textit{D-NeRF Dataset} \cite{pumarola2021d}: A monocular dataset consists of videos from eight synthetic scenes. To evaluate the model's performance, we follow the previous approaches \cite{fridovich2023k, yang2023real} and use the standard
% testing protocol, where the views used for model training and testing are different.
% (2) \textit{Neural 3D Video Dataset} (Neu3DV) \cite{li2022neural}: A real-world dataset includes six multi-view video scenes captured by 18 to 21 cameras. Consistent with existing approaches \cite{cao2023hexplane, fridovich2023k, yang2023real}, one viewpoint is reserved for testing, while the remaining viewpoints are used for training.
\textbf{Datasets.} We evaluate our method on two representative benchmarks: (1) \textit{D-NeRF Dataset} \cite{pumarola2021d}: A monocular dataset consists of videos from 8 synthetic scenes. 
(2) \textit{Neural 3D Video Dataset} (Neu3DV) \cite{li2022neural}: A real-world dataset includes 6 multi-view video scenes captured by 18 to 21 cameras. 

\begin{table}[th]
\centering
\caption{Comparison with state-of-the-art methods on the Neu3DV dataset. $+$ denotes results reproduced under our experimental setting, as the original protocol adopts a different train/test split.}
{
\begin{tabular}{l|c|cccc}
\toprule
\multicolumn{1}{l|}{\textbf{Methods}} & \textbf{Venue}  & \textbf{PSNR} $\uparrow$ & \textbf{SSIM} $\uparrow$ & \textbf{LPIPS} $\downarrow$ \\
\midrule
Neural Volumes \cite{lombardi2019neural} & TOG     & 22.80 &   0.94   & 0.295 \\
LLFF \cite{mildenhall2019local}          & TOG     & 23.24 &   0.92   & 0.235 \\
DyNeRF \cite{li2022neural}               & CVPR    & 29.58 &   0.98   & 0.099 \\
HexPlane \cite{cao2023hexplane}          & CVPR    & 31.70 &   0.98   & 0.075 \\
K-Planes-explicit \cite{fridovich2023k}  & CVPR    & 30.88 &   0.98   &   -   \\
K-Planes-hybrid \cite{fridovich2023k}    & CVPR    & 31.63 &   0.98   &   -   \\
MixVoxels-L \cite{wang2023mixed}         & ICCV    & 30.80 &   0.98   & 0.126 \\
StreamRF \cite{li2022streaming}          & NeurIPS & 29.58 &     -    &   -   \\
NeRFPlayer \cite{song2023nerfplayer}     & TVCG    & 30.69 &   0.97   & 0.111 \\
HyperReel \cite{attal2023hyperreel}      & CVPR    & 31.10 &   0.96   & 0.096 \\
4DGaussians \cite{wu20244d}              & CVPR    & 31.02 &   0.97   & 0.150 \\
4DGS \cite{yang2023real}                 & ICLR    & 32.01 &   0.97   & 0.055 \\
FreeTimeGS$^+$ \cite{wang2025freetimegs}     & CVPR    & 32.12 &   0.97   & 0.049 \\
Ours                                     & -       & \textbf{32.68} & \textbf{0.98}  & \textbf{0.045} \\
\bottomrule
\end{tabular}
}
\label{tab:Neural_3D_Video}
\vspace{-3mm}
\end{table}

\begin{figure}[th]
\centerline{\includegraphics[width=0.9\textwidth]{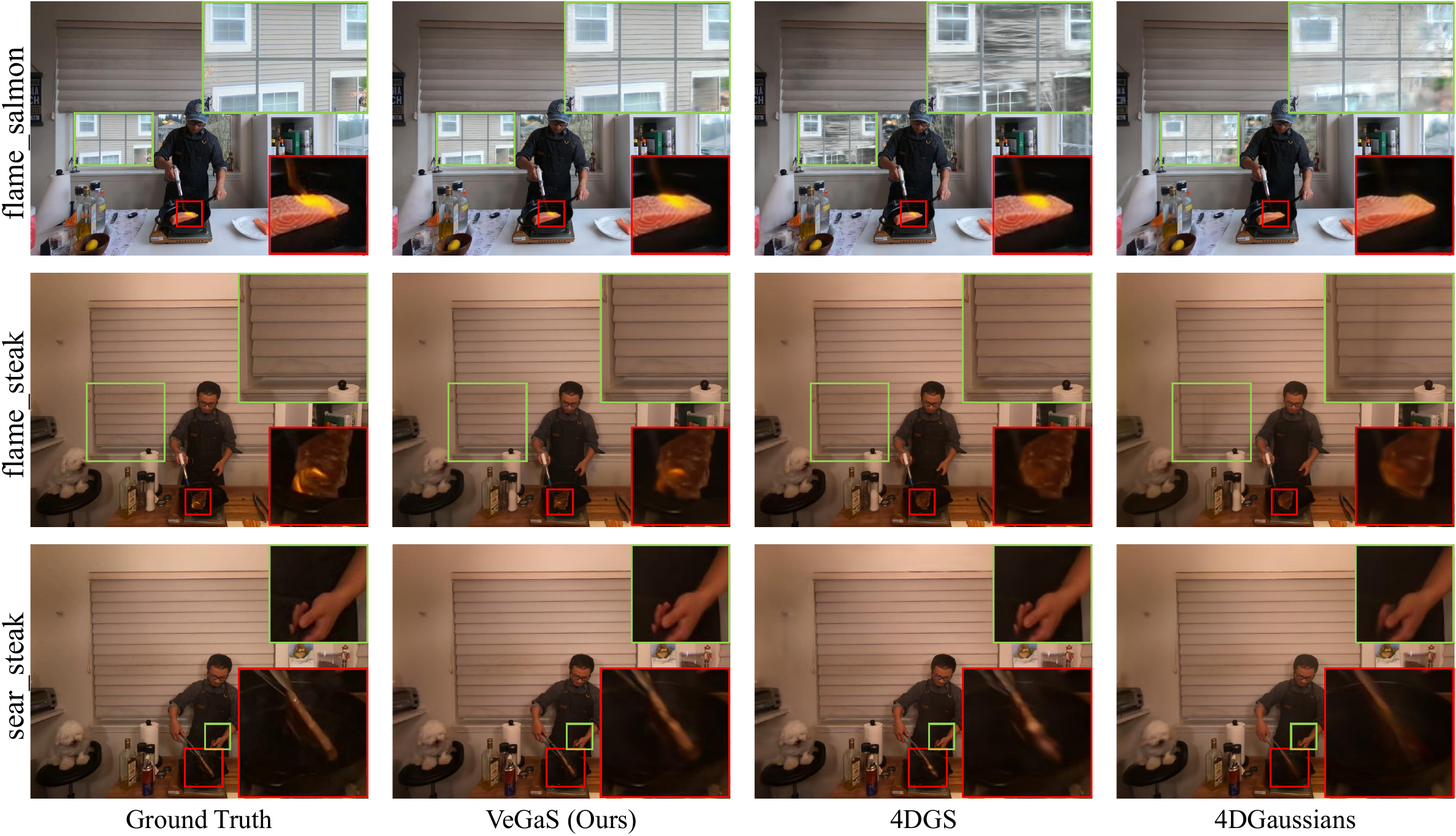}}
\caption{Qualitative result on the Neural 3D Video dataset. Our method exhibits noticeably higher visual quality compared to others.}
\label{real_quantitative_results}
\vspace{-7mm}
\end{figure}

% \textbf{Evaluation Metrics.} We evaluate the rendering quality using three widely adopted image quality metrics: Peak Signal-to-Noise Ratio (PSNR), Structural Similarity Index Measure (SSIM), and Learned Perceptual Image Patch Similarity (LPIPS) \cite{zhang2018unreasonable}.
\vspace{-1mm}
\textbf{Implementation Details.} Following \cite{yang2023real}, we use the Adam optimizer and adopt the same hyperparameter setting, including loss weight, learning rate, threshold and initialized number of Gaussians, to train our VeGaS model with a total of 30k iterations. The densification is terminated at the midpoint of the optimization schedule. A Gaussian filter with a threshold $p(t) < 0.05$ is employed to screen Gaussians for novel view rendering. For the time-variant velocity, we set the learning rate to $2e^{-3}$ and the number of anchors to 6. The learning rate of the deformation network decays from $8e^{-4}$ to $1.6e^{-6}$, with weight regularization of $1e^{-6}$ for training stability. We reproduce the results of FreeTimeGS \cite{wang2025freetimegs} on the Neu3DV dataset. All experiments are conducted on NVIDIA A6000 GPU.

% \textbf{Compared Approaches.} We compare our method with the current state-of-the-art approaches, including Neural Volumes \cite{lombardi2019neural}, LLFF \cite{mildenhall2019local}, DyNeRF \cite{li2022neural}, K-Planes \cite{fridovich2023k}, MixVoxels-L \cite{wang2023mixed}, StreamRF \cite{li2022streaming}, NeRFPlayer \cite{song2023nerfplayer}, HyperReel \cite{attal2023hyperreel}, 4DGaussians \cite{wu20244d}, 4DGS \cite{yang2023real}, 4DGV \cite{dai20254d}, D-NeRF \cite{pumarola2021d}, TiNeuVox \cite{fang2022fast}, V4D \cite{gan2023v4d}, and 7DGS \cite{gao20257dgs}. Following the evaluation protocol in \cite{yang2023real}, results on the Neu3DV and D-NeRF datasets are presented. For 4DGS, we reproduce the experimental results using their official released code. Specifically, for the Neu3DV dataset, Neural Volumes, LLFF, DyNeRF, and StreamRF only provide performance results for the \textit{flames salmon} scene. For NeRFPlayer and HyperReel, only SSIM is reported instead of MS-SSIM as with the other methods. Additionally, results for 4DGaussians are provided for the \textit{Spinach}, \textit{Beef}, and \textit{Steak} scenes. On the D-NeRF dataset, for 4DGaussians, rendering is performed at $800 \times 800$ resolution, with downsampling by a factor of $2$ for the other methods. 

\subsection{Results on Dynamic Scenes}
\vspace{-1.5mm}
\textbf{Results on multi-view real scenes.}
Table~\ref{tab:Neural_3D_Video} presents a quantitative comparison between our framework and state-of-the-art methods on the Neural 3D Video (Neu3DV) dataset, a multi-view real-world benchmark. As shown, our method consistently outperforms prior approaches across all evaluation metrics. In particular, compared with 4DGS, our approach improves PSNR from $32.01$ to $32.68$, achieving an absolute gain of $0.67$ dB. Meanwhile, LPIPS is reduced from $0.10$ to $0.09$, corresponding to a relative improvement of over $10\%$. The improved LPIPS scores indicate better preservation of fine-grained details and perceptual sharpness in the reconstructed scenes.

\begin{table}[t]
\centering
\caption{Quantitative comparison of methods on the monocular synthetic D-NeRF dataset.}
{
\begin{tabular}{l|c|cccc}
\toprule
\multicolumn{1}{l|}{\textbf{Methods}} & \textbf{Venue } & \textbf{PSNR} $\uparrow$ & \textbf{SSIM} $\uparrow$ & \textbf{LPIPS} $\downarrow$ \\
\midrule
T-NeRF \cite{pumarola2021d}               & CVPR     & 29.51 &   0.95   &  0.08 \\
D-NeRF \cite{pumarola2021d}               & CVPR     & 29.67 &   0.95   &  0.07 \\
TiNeuVox \cite{fang2022fast}              & SIGGRAPH Asia  & 32.67 &   0.97   &  0.04 \\
K-Planes-explicit \cite{fridovich2023k}   & CVPR     & 31.05 &   0.97   &   -   \\
K-Planes-hybrid \cite{fridovich2023k}     & CVPR     & 31.61 &   0.97   &   -   \\
V4D \cite{gan2023v4d}                     & TVCG     & 33.72 &   0.98   &  0.02 \\
4DGaussians \cite{wu20244d}               & CVPR     & 33.30 &   0.98   &  0.03 \\
4DGS \cite{yang2023real}                  & ICLR     & 34.09 &   0.98   &  0.02 \\
7DGS \cite{gao20257dgs}                   & ICCV     & 34.34 &   0.97   &  0.03 \\ 
Ours                                      & -        & \textbf{34.67} &   \textbf{0.99}   &  \textbf{0.02}  \\
\bottomrule
\end{tabular}
}
\label{tab:D-NeRF}
\vspace{-3mm}
\end{table}

\begin{figure}[t]
\centerline{\includegraphics[width=0.9\textwidth]{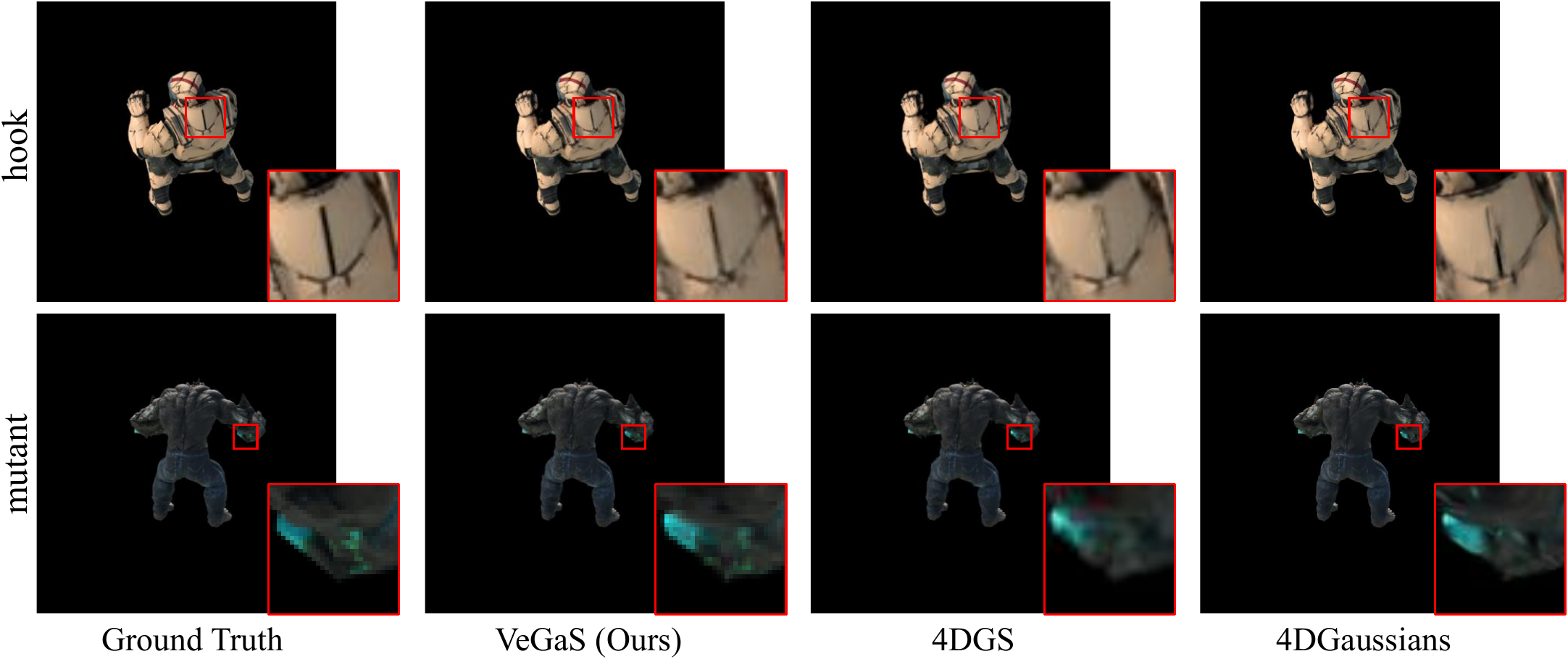}}
\caption{Qualitative results on the D-NeRF dataset, where our method captures finer details.}
\label{synthetic_quantitative_results}
\vspace{-7mm}
\end{figure}

Fig.\ref{real_quantitative_results} provides qualitative comparisons of novel view synthesis on the Neu3DV dataset. The visual results show that 4DGS suffers from noticeable artifacts, including distorted backgrounds outside the windows in the \textit{flame\_salmon} scene, as well as degraded cross-section textures of the steak in the \textit{sear\_steak} scene. These artifacts stem from both the coupling of Gaussian motion modeling and geometric attributes through the covariance matrix and the time-invariant nature of Gaussian properties, which limit accurate fitting of complex dynamics. 4DGaussians exhibits local blurring and a loss of fine-grained details, such as the background regions outside the windows in \textit{flame\_salmon}, the flames in \textit{flame\_salmon} and \textit{flame\_steak}, as well as the steak surface and finger regions in \textit{sear\_steak}. Compared to these methods, VeGaS consistently produces results with higher visual fidelity. Fine-grained details, including irregular flame patterns, clear outdoor scenery through windows, and well-defined finger structures are better preserved, indicating more faithful geometric reconstruction and sharper perceptual quality.

\vspace{-1.5mm}
\textbf{Results on monocular synthetic scenes.}
We evaluate our method on monocular dynamic scenes, which are widely regarded as challenging due to inherent ambiguity and limited information available from single-view observations. Tab.~\ref{tab:D-NeRF} presents a comparison between our framework and state-of-the-art methods on the D-NeRF dataset, a monocular synthetic benchmark. Our method achieves superior performance compared to all competing approaches across the evaluated metrics. These results demonstrate that our method can reliably exploit temporal coherence to compensate for the lack of multi-view constraints, achieving robust dynamic scene reconstruction under monocular settings.

Fig.~\ref{synthetic_quantitative_results} presents qualitative comparisons on the D-NeRF dataset. VeGaS reconstructs fine-grained structural details more faithfully than competing methods, such as the vertical armor ridges in the \textit{hook} scene and the arm structures in the \textit{mutant} scene. These results demonstrate the superior synthesis quality of VeGaS, particularly in preserving local geometry and detailed dynamic appearance.
%Fig.\ref{synthetic_quantitative_results} illustrates qualitative comparisons on the D-NeRF dataset. VeGaS accurately reconstructs fine-grained structural details, including the vertical ridges of the armor in the \textit{hook} scene and the detailed arm structures in the \textit{mutant} scene, demonstrating noticeably superior synthesis quality compared to competing approaches.

\subsection{Ablation Studies}
Extensive ablation studies are conducted on Neu3DV set to validate the effectiveness of our method.

\vspace{-1.5mm}
\textbf{Effectiveness of velocity and geometric modeling.}
We analyze the contributions of the proposed velocity and geometric modeling components by evaluating two ablated variants. First, we retain only the time-varying velocity component to assess its effect on motion modeling. As shown in Fig.~\ref{visual_ablation} on the \textit{sear\_steak} scene, velocity modeling substantially improves the reconstruction of rigid object motion. It better captures the trajectory of the meat clamp, producing sharper and more temporally consistent object contours than the variant with geometric modeling only. Second, we retain only the Geometric Deformation Network while disabling the velocity component. As illustrated in the \textit{flame\_steak} scene, geometric modeling is particularly beneficial for highly deformable objects such as flames, leading to more faithful shape variations and improved reconstruction quality. These results show that velocity modeling and geometric modeling capture complementary dynamic cues: the former focuses on coherent motion trajectories, while the latter enhances deformation representation.

%\textbf{Effectiveness of Velocity modeling.} To assess the impact of velocity modeling, we incorporate only the proposed time-varying velocity. As shown in Fig.~\ref{visual_ablation} for the \textit{sear\_steak} scene, introducing velocity modeling significantly improves the reconstruction of rigid object motion. It more accurately captures the motion trajectory of the meat clamp, resulting in clearer and more consistent object contours compared to the version utilizing only geometry modeling.

\begin{table}[t!]
\centering
% \setlength{\tabcolsep}{4.0pt}
% \small
\caption{Ablation study on the components of our design.}
\begin{tabular}{l|ccccc}
\toprule
\multicolumn{1}{l|}{\textbf{Methods}} & \textbf{PSNR} $\uparrow$ & \textbf{SSIM} $\uparrow$ & \textbf{LPIPS} $\downarrow$ & \textbf{FPS} $\uparrow$ & \textbf{Storage (MB)} $\downarrow$\\
\midrule
4DGS \cite{yang2023real}  & 32.01 & 0.97 & 0.055 & 114 & 2195\\ 
+ velocity                    & 32.07 & 0.97 & 0.050 & 202 & 1124\\  
+ geometric modeling          & 32.17 & 0.97 & 0.052 & 134 & 1088\\
\midrule
VeGaS (Ours Full)             & \textbf{32.68} & \textbf{0.98} & \textbf{0.045} & \textbf{115} & \textbf{1124}\\
\bottomrule
\end{tabular}
\label{tab:Ablation1}
\vspace{-6mm}
\end{table}

\begin{table}[t!]
\centering
\caption{Ablation study on the impact of anchor number.}

\setlength{\tabcolsep}{3.5pt} 

\resizebox{\linewidth}{!}{
    \begin{tabular}{c|c|ccc | c|c|ccc}
    \toprule
    
    \multirow{6}{*}{\rotatebox{90}{flame\_salmon}} 
    & \textbf{\# Anchors} & \textbf{PSNR} $\uparrow$ & \textbf{SSIM} $\uparrow$ & \textbf{LPIPS} $\downarrow$ 
    & \multirow{6}{*}{\rotatebox{90}{trex}} 
    & \textbf{\# Anchors} & \textbf{PSNR} $\uparrow$ & \textbf{SSIM} $\uparrow$ & \textbf{LPIPS} $\downarrow$ \\
    
    \cmidrule(lr){2-5} \cmidrule(lr){7-10}
    
    & 2          & 29.25 & 0.960 & 0.065   &  & 2          & 31.30 & 0.993 & 0.011\\
    & 4          & 29.54 & 0.962 & 0.063   &  & 4          & 31.43 & 0.993 & 0.011\\
    & 6   & \textbf{29.88} & \textbf{0.964} & \textbf{0.061}   &  & 6   & \textbf{31.51} & \textbf{0.993} & \textbf{0.011}\\
    & 8          & 29.30 & 0.961 & 0.062   &  & 8          & 31.34 & 0.993 & 0.011\\
    & 10         & 29.15 & 0.955 & 0.072   &  & 10         & 31.26 & 0.992 & 0.012\\
    
    \bottomrule
    \end{tabular}
}
\label{tab:Ablation2}
\vspace{-3mm}
\end{table}

\begin{figure}[t!]
\centerline{\includegraphics[width=0.92\textwidth]{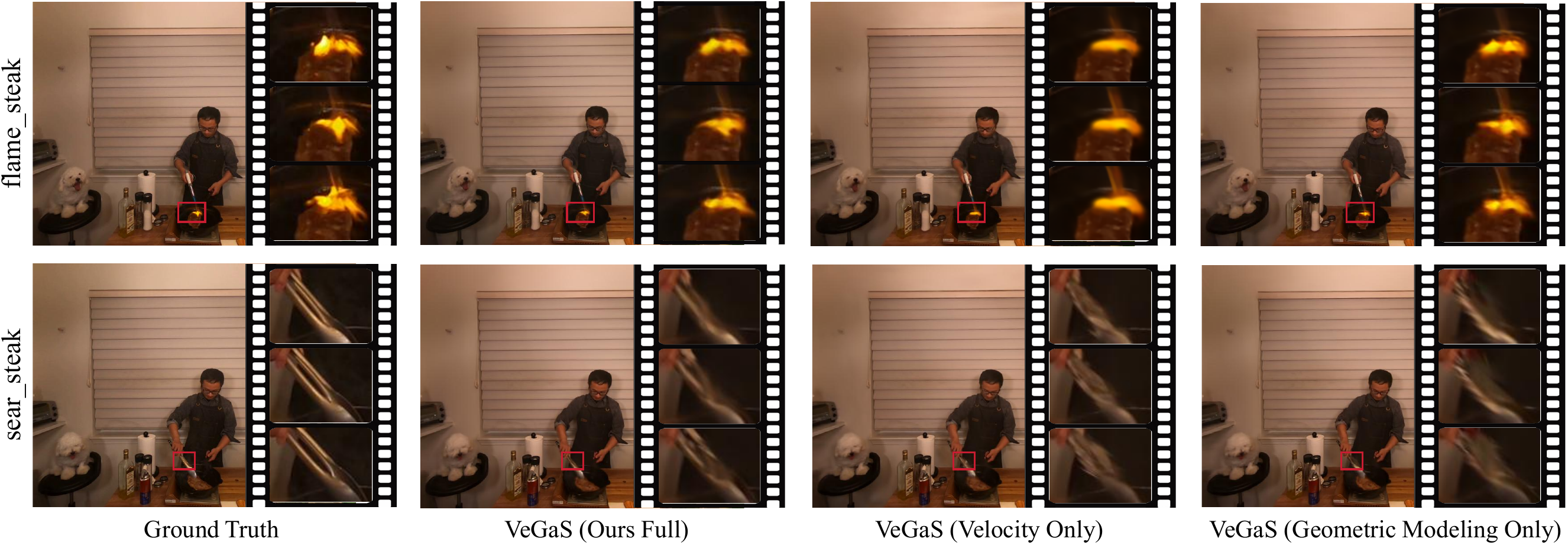}}
\caption{Visualization of ablation studies on the Neu3DV dataset, where we individually remove velocity and geometric modeling to evaluate their impact on the model's performance. Continuous video frames are extracted to observe the effects of removing these components.}
\label{visual_ablation}
\vspace{-6mm}
\end{figure}

%\textbf{Effectiveness of Geometric Modeling.} We further analyze the contribution of geometric modeling by only introducing the proposed Geometric Deformation Network, while disabling the velocity component. As illustrated in Fig.~\ref{visual_ablation} for the \textit{flame\_steak} scene, this geometric enhancement substantially improves the reconstruction quality of highly deformable objects such as flames, leading to more faithful shape variations.

% \textbf{Full Method.} By jointly incorporating time-varying velocity and the Geometric Deformation Network, the complete VeGaS framework achieves a significant improvement over the original 4DGS baseline, as reported in Tab.~\ref{tab:Ablation1}. These results indicate that effective dynamic scene reconstruction benefits from both flexible motion modeling and accurate geometric deformation modeling.
\vspace{-1.5mm}
\textbf{Comprehensive comparison with 4DGS.}
By jointly incorporating time-varying velocity and the Geometric Deformation Network, the complete VeGaS framework achieves substantial improvements over 4DGS, as reported in Tab.~\ref{tab:Ablation1}. In addition, the proposed design improves the expressive capacity of individual Gaussians with negligible computational overhead, enabling high-quality reconstruction with significantly fewer primitives. Specifically, VeGaS reduces the number of Gaussians from 3572K to 1690K and saves approximately 1000 MB of storage, while maintaining a rendering speed comparable to the 4DGS. These results indicate that effective dynamic scene reconstruction benefits from both coherent motion modeling and accurate geometric deformation modeling.
%\textbf{Full Method.} By jointly incorporating time-varying velocity and the Geometric Deformation Network, the complete VeGaS framework achieves a significant improvement over the original 4DGS baseline, as reported in Tab.~\ref{tab:Ablation1}. Furthermore, our method improves the expressive capacity of each Gaussian with negligible overhead, enabling strong reconstruction quality with substantially fewer Gaussians. As a result, the number of Gaussians is reduced from 3572K to 1690K, saving about 1000 MB of storage while maintaining a rendering speed comparable to the baseline. These results indicate that effective dynamic scene reconstruction benefits from both flexible motion modeling and accurate geometric deformation modeling.

\vspace{-1.5mm}
\textbf{Impact of anchor number for velocity computation.} 
We further study the effect of the number of velocity anchors on two relatively complex scenes, as reported in Tab.~\ref{tab:Ablation2}. Using 6 anchors achieves the best overall performance, yielding the highest PSNR and SSIM and the lowest LPIPS on \textit{flame\_salmon}, as well as the highest PSNR on \textit{trex}. Increasing the number of anchors beyond 6 does not bring further improvement and may even degrade reconstruction quality, suggesting that excessive anchors can introduce redundant or unstable motion bases. These results indicate that 6 anchors provide a favorable balance between representation capacity and optimization stability, and are sufficient to capture complex scene motions.
%We further conduct an ablation study on the number of velocity anchors using two relatively complex scenes, as reported in Tab.~\ref{tab:Ablation2}. The results indicate that utilizing 6 anchors provides the best overall trade-off between reconstruction quality and efficiency, which is already sufficient to capture the complex motion across scenes. 
% Although this anchor number in practice can be chosen empirically based on specific scene characteristics, we fix it to 6 across all experiments to maintain a unified protocol and ensure fair comparison.

\section{Conclusion}
We introduce VeGaS, a velocity-aware framework that decouples motion and geometry in 4D Gaussian Splatting. VeGaS overcomes a core limitation of prior methods, which conflate motion and geometric attributes within a unified covariance formulation. Drawing inspiration from Galilean shearing, we parameterize motion via a time-varying velocity, enabling flexible modeling of non-linear dynamics while maintaining a geometry-preserving conditional covariance. In addition, a geometric deformation network is employed to enhance temporal geometric expressiveness. Extensive experiments on real-world and synthetic benchmarks validate the effectiveness of the proposed design.

% \begin{ack}
% Use unnumbered first level headings for the acknowledgments. All acknowledgments
% go at the end of the paper before the list of references. Moreover, you are required to declare
% funding (financial activities supporting the submitted work) and competing interests (related financial activities outside the submitted work).
% More information about this disclosure can be found at: \url{https://neurips.cc/Conferences/2026/PaperInformation/FundingDisclosure}.

% Do {\bf not} include this section in the anonymized submission, only in the final paper. You can use the \texttt{ack} environment provided in the style file to automatically hide this section in the anonymized submission.
% \end{ack}

\bibliography{main}
\bibliographystyle{plain}
% \section*{References}

% References follow the acknowledgments in the camera-ready paper. Use unnumbered first-level heading for
% the references. Any choice of citation style is acceptable as long as you are
% consistent. It is permissible to reduce the font size to \verb+small+ (9 point)
% when listing the references.
% Note that the Reference section does not count towards the page limit.
% \medskip

% {
% \small

% [1] Alexander, J.A.\ \& Mozer, M.C.\ (1995) Template-based algorithms for
% connectionist rule extraction. In G.\ Tesauro, D.S.\ Touretzky and T.K.\ Leen
% (eds.), {\it Advances in Neural Information Processing Systems 7},
% pp.\ 609--616. Cambridge, MA: MIT Press.

% [2] Bower, J.M.\ \& Beeman, D.\ (1995) {\it The Book of GENESIS: Exploring
%   Realistic Neural Models with the GEneral NEural SImulation System.}  New York:
% TELOS/Springer--Verlag.

% [3] Hasselmo, M.E., Schnell, E.\ \& Barkai, E.\ (1995) Dynamics of learning and
% recall at excitatory recurrent synapses and cholinergic modulation in rat
% hippocampal region CA3. {\it Journal of Neuroscience} {\bf 15}(7):5249-5262.
% }

%%%%%%%%%%%%%%%%%%%%%%%%%%%%%%%%%%%%%%%%%%%%%%%%%%%%%%%%%%%%
\newpage
\appendix

\section{Additional Experiment Results}

\begin{figure}[h]
\centerline{\includegraphics[width=\textwidth]{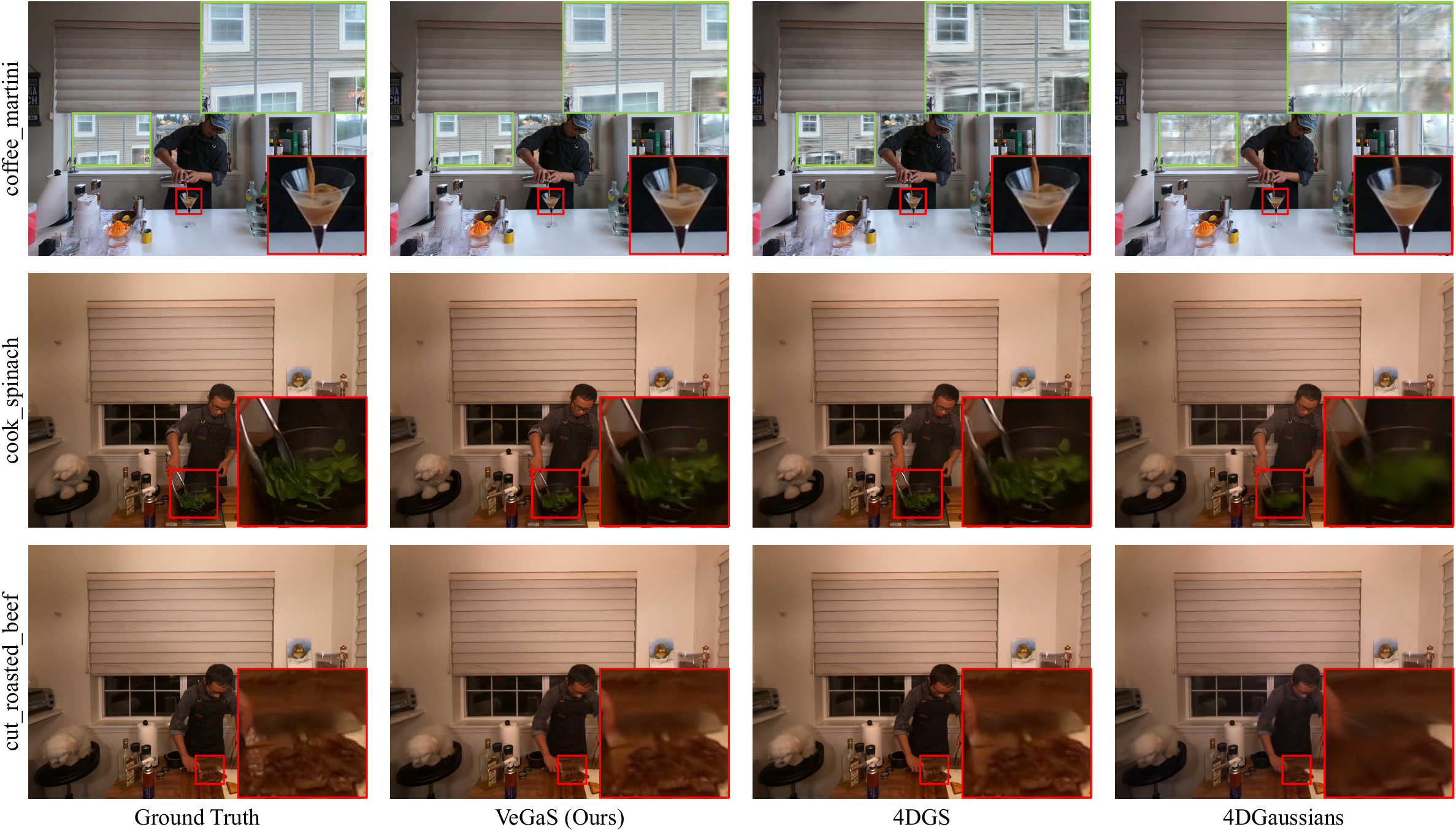}}
\caption{Additional qualitative result on the Neural 3D Video dataset. Our method exhibits noticeably higher visual quality compared to others.}
\end{figure}

\section{Extended Discussions and Limitations}

\textbf{Q:} How does the fixed number of velocity anchors ($N_v=6$) generalize to long-duration videos or high-frequency motions?

\textbf{A:} Using a fixed number of velocity anchors $N_v=6$ introduces a structural limitation and may become suboptimal for very long videos or highly unpredictable high-frequency motion. However, in our experiments on Neu3DV and D-NeRF, $N_v=6$ provides the best overall trade-off between reconstruction quality and efficiency. Although it may not be the optimal choice for every single scene, we fix $N_v=6$ across all experiments to maintain a unified protocol and ensure fair comparison. At the same time, the anchor number in practice can be chosen empirically based on scene characteristics, such as video complexity or cues from COLMAP reconstruction. This choice is not fundamental to our framework, which is fully compatible with more adaptive strategies such as dynamic anchor allocation.

\section{Detailed derivation of the conditional distribution of a 4D Gaussian}
\label{appendix:derivation_conditional_distribution}

\subsection{Transformation of the 4D Covariance Matrix}
Given the Galilean shearing matrix $V$, the modified covariance matrix is given by:
\begin{equation}
\Sigma' = V R S S^T R^T V^T = V \Sigma V^T,
\end{equation}
where $R$ denotes the rotation matrix and $S$ denotes the scaling matrix. The shearing velocity matrix $\mathbf{V}$ explicitly represents the positional transformation of a Gaussian point in the 4D spatio-temporal continuum.

By expanding the block matrix multiplication, we can derive the structure of the covariance matrix $\Sigma'$:
\begin{equation}
\Sigma' =
\begin{pmatrix}
\mathbf{I}_3 & v(t) \\
\mathbf{0} & 1
\end{pmatrix}
\begin{pmatrix}
\Sigma_{1:3,\,1:3} & \Sigma_{1:3,\,4} \\
\Sigma_{4,\,1:3} & \Sigma_{4,\,4}
\end{pmatrix}
\begin{pmatrix}
\mathbf{I}_3 & \mathbf{0} \\
v(t)^\top & 1
\end{pmatrix}.
\end{equation}

Carrying out the multiplication for the block matrix $\Sigma = \begin{pmatrix}
\Sigma_{1:3,\,1:3} & \Sigma_{1:3,\,4} \\
\Sigma_{4,\,1:3} & \Sigma_{4,\,4}
\end{pmatrix}$, the individual blocks of the updated covariance matrix $\Sigma'$ are obtained as follows:
\begin{equation}
\label{equation_each_block}
\begin{aligned}
\quad \Sigma'_{1:3,\,1:3} =& \Sigma_{1:3,\,1:3} + v(t)\Sigma_{4,\,1:3} + \Sigma_{1:3,\,4}v(t)^\top \\ &+ v(t)\Sigma_{4,\,4}v(t)^\top, \\
\Sigma'_{1:3,\,4} =& \Sigma_{1:3,\,4} + v(t)\Sigma_{4,\,4}, \\
\Sigma'_{4,\,1:3} =& \Sigma_{4,\,1:3} + \Sigma_{4,\,4}v(t)^\top, \\
\Sigma'_{4,\,4} =& \Sigma_{4,\,4}.
\end{aligned}
\end{equation}

\subsection{General Multivariate Conditional Gaussian Distribution}
Before applying the conditional logic to the 4D Gaussian, we first derive the general form of the conditional distribution. 

Consider a multivariate Gaussian random variable $\mathbf{x} \sim \mathcal{N}(\boldsymbol{\mu}, \boldsymbol{\Sigma})$ partitioned into two subsets $\mathbf{x}_a$ and $\mathbf{x}_b$:
\begin{equation}
\mathbf{x} = \begin{pmatrix} \mathbf{x}_a \\ \mathbf{x}_b \end{pmatrix}, \quad
\boldsymbol{\mu} = \begin{pmatrix} \boldsymbol{\mu}_a \\ \boldsymbol{\mu}_b \end{pmatrix}, \quad
\boldsymbol{\Sigma} = \begin{pmatrix} \boldsymbol{\Sigma}_{aa} & \boldsymbol{\Sigma}_{ab} \\ \boldsymbol{\Sigma}_{ba} & \boldsymbol{\Sigma}_{bb} \end{pmatrix}.
\end{equation}
To find the conditional distribution $p(\mathbf{x}_a | \mathbf{x}_b)$, we construct a linear transformation to decorrelate $\mathbf{x}_a$ from $\mathbf{x}_b$. Let $\mathbf{z} = \mathbf{x}_a - \mathbf{A}\mathbf{x}_b$. We seek a matrix $\mathbf{A}$ such that $\mathbf{z}$ and $\mathbf{x}_b$ are uncorrelated (and thus independent for Gaussians):
\begin{equation}
\text{Cov}(\mathbf{z}, \mathbf{x}_b) = \text{Cov}(\mathbf{x}_a, \mathbf{x}_b) - \mathbf{A}\text{Cov}(\mathbf{x}_b, \mathbf{x}_b) = \boldsymbol{\Sigma}_{ab} - \mathbf{A}\boldsymbol{\Sigma}_{bb} = \mathbf{0}.
\end{equation}
Solving for $\mathbf{A}$ yields $\mathbf{A} = \boldsymbol{\Sigma}_{ab}\boldsymbol{\Sigma}_{bb}^{-1}$. Since $\mathbf{z}$ is independent of $\mathbf{x}_b$, the conditional expectation is derived as:
\begin{align}
\mathbb{E}[\mathbf{x}_a | \mathbf{x}_b] &= \mathbb{E}[\mathbf{A}\mathbf{x}_b + \mathbf{z} | \mathbf{x}_b] = \mathbf{A}\mathbf{x}_b + \mathbb{E}[\mathbf{z}] \nonumber \\
&= \boldsymbol{\Sigma}_{ab}\boldsymbol{\Sigma}_{bb}^{-1}\mathbf{x}_b + (\boldsymbol{\mu}_a - \boldsymbol{\Sigma}_{ab}\boldsymbol{\Sigma}_{bb}^{-1}\boldsymbol{\mu}_b) \nonumber \\
&= \boldsymbol{\mu}_a + \boldsymbol{\Sigma}_{ab}\boldsymbol{\Sigma}_{bb}^{-1}(\mathbf{x}_b - \boldsymbol{\mu}_b).
\end{align}
The conditional covariance is simply the variance of the residual $\mathbf{z}$:
\begin{equation}
\text{Cov}(\mathbf{x}_a | \mathbf{x}_b) = \text{Cov}(\mathbf{z}) = \boldsymbol{\Sigma}_{aa} - \boldsymbol{\Sigma}_{ab}\boldsymbol{\Sigma}_{bb}^{-1}\boldsymbol{\Sigma}_{ba}.
\end{equation}

\subsection{Application to Our 4D Gaussian Representation}
Applying the general derivation above to our specific 4D case, we map the spatial coordinates to partition $a$ (indices $1:3$) and the temporal coordinate to partition $b$ (index $4$). We substitute the transformed covariance blocks $\Sigma'$ into the conditional formulas.

The modified conditional mean $\mu'_{xyz|t}$ is given by:
\begin{align}
\mu'_{xyz|t} &= \mu_{1:3} + \Sigma'_{1:3,4} (\Sigma'_{4,4})^{-1} (t - \mu_t) \nonumber \\
&= \mu_{1:3} + \left( \Sigma_{1:3,4} + v(t)\Sigma_{4,4} \right) \Sigma_{4,4}^{-1} (t - \mu_t) \nonumber \\
&= \mu_{1:3} + \left( \Sigma_{1:3,4}\Sigma_{4,4}^{-1} + v(t) \right) (t - \mu_t).
\end{align}

Similarly, the modified conditional covariance $\Sigma'_{xyz|t}$ is calculated as:
\begin{equation}
\Sigma'_{xyz|t} = \Sigma'_{1:3,1:3} - \Sigma'_{1:3,4}(\Sigma'_{4,4})^{-1}\Sigma'_{4,1:3}.
\end{equation}
To simplify this, we first expand the subtraction term $K = \Sigma'_{1:3,4}(\Sigma'_{4,4})^{-1}\Sigma'_{4,1:3}$:
\begin{align}
K &= \left( \Sigma_{1:3,4} + v(t)\Sigma_{4,4} \right) \Sigma_{4,4}^{-1} \left( \Sigma_{4,1:3} + \Sigma_{4,4}v(t)^\top \right) \nonumber \\
&= \left( \Sigma_{1:3,4}\Sigma_{4,4}^{-1} + v(t) \right) \left( \Sigma_{4,1:3} + \Sigma_{4,4}v(t)^\top \right) \nonumber \\
&= \Sigma_{1:3,4}\Sigma_{4,4}^{-1}\Sigma_{4,1:3} + \Sigma_{1:3,4}v(t)^\top + v(t)\Sigma_{4,1:3} + v(t)\Sigma_{4,4}v(t)^\top.
\end{align}
Substituting $K$ and the expression for $\Sigma'_{1:3,1:3}$ from Eq.~\eqref{equation_each_block} back into the conditional covariance formula, we observe that all terms involving $v(t)$ cancel out:
\begin{align}
\Sigma'_{xyz|t} &= \left( \Sigma_{1:3,1:3} + v(t)\Sigma_{4,1:3} + \Sigma_{1:3,4}v(t)^\top + v(t)\Sigma_{4,4}v(t)^\top \right) - K \nonumber \\
&= \Sigma_{1:3,1:3} - \Sigma_{1:3,4}\Sigma_{4,4}^{-1}\Sigma_{4,1:3}.
\end{align}
This proves that the shape of the conditional covariance remains invariant under the Galilean shear transformation, identical to the original spatial covariance.

\section{Schur Complement}
\begin{definition}
[Schur Complement]
Schur Complement]Consider a partitioned symmetric matrix $\mathbf{M} \in \mathbb{R}^{(n+m) \times (n+m)}$, decomposed into block matrices as follows:
\begin{equation}
\mathbf{M} = \begin{pmatrix} 
\mathbf{A} & \mathbf{B} \\ 
\mathbf{B}^\top & \mathbf{D} 
\end{pmatrix},
\end{equation}
where $\mathbf{A} \in \mathbb{R}^{n \times n}$, $\mathbf{B} \in \mathbb{R}^{n \times m}$, and $\mathbf{D} \in \mathbb{R}^{m \times m}$ is invertible. The \textit{Schur complement} of the block $\mathbf{D}$ in $\mathbf{M}$, denoted as $\mathbf{M} / \mathbf{D}$, is defined as:
\begin{equation}
\mathbf{M} / \mathbf{D} \triangleq \mathbf{A} - \mathbf{B}\mathbf{D}^{-1}\mathbf{B}^\top.
\end{equation}
\end{definition}

\begin{remark}
[Probabilistic Interpretation]
In the context of multivariate Gaussian distributions, if $\mathbf{M}$ represents the joint covariance matrix of two random vectors $\mathbf{x} \in \mathbb{R}^n$ and $\mathbf{y} \in \mathbb{R}^m$, the Schur complement $\mathbf{M} / \mathbf{D}$ corresponds precisely to the \textit{conditional covariance} of $\mathbf{x}$ given $\mathbf{y}$ (i.e., $\text{Cov}(\mathbf{x} | \mathbf{y})$). 
\end{remark}

\section{Proof of Schur Complement Invariance}
\label{appendix:proof_schur}

\begin{theorem}
[Schur Complement Invariance] 
Let $\mathbf{\Sigma} \in \mathbb{R}^{4 \times 4}$ be a symmetric positive semi-definite covariance matrix of a 4D Gaussian, and let $\mathbf{\Sigma}' = \mathbf{V}\mathbf{\Sigma}\mathbf{V}^\top$ be the congruence transformation induced by the shearing matrix $\mathbf{V}$. Denoting by $\text{Schur}_{4,4}(\cdot)$ the Schur complement with respect to the temporal dimension, the following invariance holds:
\begin{equation}
\text{Schur}_{4,4}(\mathbf{\Sigma}') = \text{Schur}_{4,4}(\mathbf{\Sigma}).
\end{equation}
\end{theorem}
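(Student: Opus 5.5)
The plan is to compute the Schur complement of $\mathbf{\Sigma}'$ directly from its block structure. I assume throughout that $\Sigma_{4,4}>0$, as in the Definition of the Schur complement, so that the temporal block is invertible; otherwise $\text{Schur}_{4,4}$ is undefined. Write $\mathbf{\Sigma}$ in blocks $\mathbf{A}=\Sigma_{1:3,1:3}$, $\mathbf{b}=\Sigma_{1:3,4}$, $d=\Sigma_{4,4}$. The block expansion in Eq.~\eqref{equation_each_block} gives the blocks of $\mathbf{\Sigma}'$:
\[
\mathbf{A}' = \mathbf{A}+\mathbf{v}\mathbf{b}^\top+\mathbf{b}\mathbf{v}^\top+d\,\mathbf{v}\mathbf{v}^\top,\qquad
\mathbf{b}'=\mathbf{b}+d\,\mathbf{v},\qquad
d'=d.
\]
Since $d'=d$, the temporal block is unchanged, and $\text{Schur}_{4,4}(\mathbf{\Sigma}')$ is well defined whenever $\text{Schur}_{4,4}(\mathbf{\Sigma})$ is.

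Next I expand the correction term:
\[
\mathbf{b}'d^{-1}\mathbf{b}'^\top=\mathbf{b}d^{-1}\mathbf{b}^\top+\mathbf{b}\mathbf{v}^\top+\mathbf{v}\mathbf{b}^\top+d\,\mathbf{v}\mathbf{v}^\top .
\]
Subtracting this from $\mathbf{A}'$ cancels the three velocity-dependent terms exactly. What remains is $\mathbf{A}-\mathbf{b}d^{-1}\mathbf{b}^\top=\text{Schur}_{4,4}(\mathbf{\Sigma})$, which is the claimed invariance. Because $\mathbf{v}=\mathbf{v}(t)$ is treated as a fixed vector at each query time, the argument holds pointwise in $t$ and therefore also covers the time-varying shear.

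As a cross-check, I would add a short conceptual argument that does not depend on the algebra. If $(\mathbf{x},t)$ has covariance $\mathbf{\Sigma}$, then $(\mathbf{x}+\mathbf{v}t,\,t)$ has covariance $\mathbf{\Sigma}'$. By the probabilistic interpretation in the Remark, $\text{Schur}_{4,4}$ is the conditional covariance given $t$. Conditioning on $t$ turns the shift $\mathbf{v}t$ into a deterministic translation, and a translation does not change a covariance. A related coordinate-free view uses the factorization $\mathbf{\Sigma}=\mathbf{L}\,\mathrm{diag}(\mathbf{\Sigma}/d,\,d)\,\mathbf{L}^\top$ with $\mathbf{L}=\begin{pmatrix}\mathbf{I}_3&\mathbf{b}d^{-1}\\ \mathbf{0}&1\end{pmatrix}$. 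Since $\mathbf{V}\mathbf{L}$ has the same unit upper block-triangular form, the congruence only replaces $\mathbf{b}d^{-1}$ by $\mathbf{b}d^{-1}+\mathbf{v}$, and the diagonal factor, which contains the Schur complement, is unchanged.

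There is no real obstacle; the cancellation is mechanical. The one point needing care is the degenerate case $\Sigma_{4,4}=0$, which the positive semi-definite hypothesis allows. I would handle it by stating that the theorem assumes $\Sigma_{4,4}>0$, which holds in practice because $s_t>0$. If a more general statement were wanted, the inverse could be replaced by the Moore--Penrose pseudo-inverse, but I would not pursue that here.
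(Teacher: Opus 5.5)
Your main argument is exactly the paper's proof, and it is correct: expand the blocks of $\mathbf{V}\mathbf{\Sigma}\mathbf{V}^\top$, note that the temporal block is unchanged, and cancel the three $\mathbf{v}$-dependent terms against those in $\mathbf{b}'d^{-1}(\mathbf{b}')^\top$. Your explicit assumption $\Sigma_{4,4}>0$, which the paper uses without comment, is a sensible addition, but since $\Sigma_{4,4}=\sum_j s_j^2 R_{4j}^2$ it is guaranteed by all four scales being positive (so $\mathbf{\Sigma}\succ 0$), not by $s_t>0$ alone.
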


\begin{proof}
We partition the covariance matrix $\mathbf{\Sigma}$ and the shearing matrix $\mathbf{V}$ into block forms corresponding to the spatial ($1:3$) and temporal ($4$) dimensions:
\begin{equation}
\mathbf{\Sigma} = \begin{pmatrix}
\mathbf{\Sigma}_{xx} & \mathbf{\Sigma}_{xt} \\
\mathbf{\Sigma}_{tx} & \Sigma_{tt}
\end{pmatrix}, \quad
\mathbf{V} = \begin{pmatrix}
\mathbf{I}_3 & \mathbf{v} \\
\mathbf{0}^\top & 1
\end{pmatrix},
\end{equation}
where $\mathbf{\Sigma}_{xx} \in \mathbb{R}^{3 \times 3}$, $\mathbf{\Sigma}_{xt} \in \mathbb{R}^{3 \times 1}$, $\Sigma_{tt} \in \mathbb{R}^{1 \times 1}$, and $\mathbf{v} \in \mathbb{R}^{3 \times 1}$ is the velocity vector. Note that $\mathbf{\Sigma}_{tx} = \mathbf{\Sigma}_{xt}^\top$.

First, we compute the block structure of the transformed covariance $\mathbf{\Sigma}' = \mathbf{V}\mathbf{\Sigma}\mathbf{V}^\top$. Expanding the matrix multiplication yields:
\begin{align}
\mathbf{\Sigma}' &= \begin{pmatrix}
\mathbf{I}_3 & \mathbf{v} \\
\mathbf{0} & 1
\end{pmatrix}
\begin{pmatrix}
\mathbf{\Sigma}_{xx} & \mathbf{\Sigma}_{xt} \\
\mathbf{\Sigma}_{tx} & \Sigma_{tt}
\end{pmatrix}
\begin{pmatrix}
\mathbf{I}_3 & \mathbf{0} \\
\mathbf{v}^\top & 1
\end{pmatrix} \nonumber \\
&= \begin{pmatrix}
\mathbf{\Sigma}_{xx} + \mathbf{v}\mathbf{\Sigma}_{tx} & \mathbf{\Sigma}_{xt} + \mathbf{v}\Sigma_{tt} \\
\mathbf{\Sigma}_{tx} & \Sigma_{tt}
\end{pmatrix}
\begin{pmatrix}
\mathbf{I}_3 & \mathbf{0} \\
\mathbf{v}^\top & 1
\end{pmatrix} \nonumber \\
&= \begin{pmatrix}
\mathbf{\Sigma}_{xx} + \mathbf{v}\mathbf{\Sigma}_{tx} + (\mathbf{\Sigma}_{xt} + \mathbf{v}\Sigma_{tt})\mathbf{v}^\top & \mathbf{\Sigma}_{xt} + \mathbf{v}\Sigma_{tt} \\
\mathbf{\Sigma}_{tx} + \Sigma_{tt}\mathbf{v}^\top & \Sigma_{tt}
\end{pmatrix}.
\end{align}
From the expansion above, we identify the individual blocks of $\mathbf{\Sigma}'$:
\begin{subequations}
\begin{align}
\mathbf{\Sigma}'_{xx} &= \mathbf{\Sigma}_{xx} + \mathbf{v}\mathbf{\Sigma}_{tx} + \mathbf{\Sigma}_{xt}\mathbf{v}^\top + \mathbf{v}\Sigma_{tt}\mathbf{v}^\top, \label{eq:sigma_xx_prime} \\
\mathbf{\Sigma}'_{xt} &= \mathbf{\Sigma}_{xt} + \mathbf{v}\Sigma_{tt}, \label{eq:sigma_xt_prime} \\
\Sigma'_{tt} &= \Sigma_{tt}. \label{eq:sigma_tt_prime}
\end{align}
\end{subequations}

The Schur complement of the transformed matrix, denoted as $S'$, is defined by:
\begin{equation}
S' = \text{Schur}_{4,4}(\mathbf{\Sigma}') = \mathbf{\Sigma}'_{xx} - \mathbf{\Sigma}'_{xt}(\Sigma'_{tt})^{-1}(\mathbf{\Sigma}'_{xt})^\top.
\end{equation}

Substituting the expressions from Eqs.~\eqref{eq:sigma_xx_prime}--\eqref{eq:sigma_tt_prime} into the definition, we focus on expanding the subtraction term $K = \mathbf{\Sigma}'_{xt}(\Sigma'_{tt})^{-1}(\mathbf{\Sigma}'_{xt})^\top$:
\begin{align}
K &= (\mathbf{\Sigma}_{xt} + \mathbf{v}\Sigma_{tt}) \Sigma_{tt}^{-1} (\mathbf{\Sigma}_{tx} + \Sigma_{tt}\mathbf{v}^\top) \nonumber \\
&= (\mathbf{\Sigma}_{xt}\Sigma_{tt}^{-1} + \mathbf{v}) (\mathbf{\Sigma}_{tx} + \Sigma_{tt}\mathbf{v}^\top) \nonumber \\
&= \mathbf{\Sigma}_{xt}\Sigma_{tt}^{-1}\mathbf{\Sigma}_{tx} + \mathbf{\Sigma}_{xt}\mathbf{v}^\top + \mathbf{v}\mathbf{\Sigma}_{tx} + \mathbf{v}\Sigma_{tt}\mathbf{v}^\top.
\end{align}
Now, we subtract $K$ from $\mathbf{\Sigma}'_{xx}$:
\begin{align}
S' &= \mathbf{\Sigma}'_{xx} - K \nonumber \\
&= \left( \mathbf{\Sigma}_{xx} + \mathbf{v}\mathbf{\Sigma}_{tx} + \mathbf{\Sigma}_{xt}\mathbf{v}^\top + \mathbf{v}\Sigma_{tt}\mathbf{v}^\top \right) \nonumber \\
&\quad - \left( \mathbf{\Sigma}_{xt}\Sigma_{tt}^{-1}\mathbf{\Sigma}_{tx} + \mathbf{\Sigma}_{xt}\mathbf{v}^\top + \mathbf{v}\mathbf{\Sigma}_{tx} + \mathbf{v}\Sigma_{tt}\mathbf{v}^\top \right).
\end{align}
Observing the terms, we see that $\mathbf{v}\mathbf{\Sigma}_{tx}$, $\mathbf{\Sigma}_{xt}\mathbf{v}^\top$, and $\mathbf{v}\Sigma_{tt}\mathbf{v}^\top$ appear in both parts and cancel out perfectly. The remaining terms are:
\begin{equation}
S' = \mathbf{\Sigma}_{xx} - \mathbf{\Sigma}_{xt}\Sigma_{tt}^{-1}\mathbf{\Sigma}_{tx} = \text{Schur}_{4,4}(\mathbf{\Sigma}).
\end{equation}
Thus, the Schur complement is invariant under the Galilean shear transformation.
\end{proof}

\section{Generalization of Constant-Velocity Galilean Shearing to Time-Varying Velocity}
\label{appendix:time_varying_velocity}
While the standard Galilean transformation is defined for inertial frames with constant velocity, real-world dynamics often involve non-linear trajectories with time-varying velocities. Here, we provide the mathematical justification for extending the shearing mechanism to model such motions using \textit{local linearization}.

\textbf{First-Order Taylor Approximation.}
Consider a 4D Gaussian centered at temporal coordinate $\mu_t$, tracking a particle moving along a non-linear trajectory $\gamma(t) \in \mathbb{R}^3$. We aim to approximate this trajectory within the local temporal neighborhood of the Gaussian, defined effectively by its temporal variance $\Sigma_{tt}$.

Expanding the trajectory $\gamma(t)$ around the center time $\mu_t$ using a Taylor series, we obtain:
\begin{equation}
\gamma(t) = \gamma(\mu_t) + \frac{d\gamma}{dt}\bigg|_{t=\mu_t} (t - \mu_t) + \mathcal{O}\left((t - \mu_t)^2\right).
\end{equation}
Let $\mathbf{v} = \frac{d\gamma}{dt}|_{t=\mu_t}$ denote the instantaneous velocity at the center of the Gaussian. Neglecting higher-order terms $\mathcal{O}((t - \mu_t)^2)$ (which is a valid assumption for Gaussians with small temporal extent), the trajectory is approximated as a linear function:
\begin{equation}
\label{eq:linear_approx}
\hat{\gamma}(t) \approx \gamma(\mu_t) + \mathbf{v} \cdot (t - \mu_t).
\end{equation}

\textbf{Equivalence to Galilean Shearing.}
Recall the operation of the Galilean shearing matrix $\mathbf{V}$ defined in the main text on a spatial point $\mathbf{x}$ relative to the temporal center:
\begin{equation}
\begin{pmatrix} \mathbf{x}' \\ t' \end{pmatrix} = 
\begin{pmatrix} \mathbf{I}_3 & \mathbf{v} \\ \mathbf{0} & 1 \end{pmatrix} 
\begin{pmatrix} \mathbf{x} \\ t - \mu_t \end{pmatrix} 
= \begin{pmatrix} \mathbf{x} + \mathbf{v}(t - \mu_t) \\ t - \mu_t \end{pmatrix}.
\end{equation}
By setting the initial position $\mathbf{x} = \gamma(\mu_t)$, the spatial component becomes $\mathbf{x}' = \gamma(\mu_t) + \mathbf{v}(t - \mu_t)$, which is identical to the first-order approximation in Eq.~\eqref{eq:linear_approx}.

Thus, applying the shearing matrix $\mathbf{V}$ parameterized by the instantaneous velocity $\mathbf{v}(t)$ is mathematically equivalent to locally linearizing the non-linear trajectory along the tangent direction at $t=\mu_t$.

\section{Preservation of Symmetric Positive Semi-Definiteness}
\label{appendix:spsd_proof}
A valid covariance matrix must be symmetric and positive semi-definite (SPSD) to represent a meaningful probability distribution. We prove that the congruence transformation induced by the Galilean shearing matrix preserves these essential properties.

\begin{lemma}
[Invariance of SPSD under Congruence Transformation.]
Let $\mathbf{\Sigma} \in \mathbb{R}^{4 \times 4}$ be a symmetric positive semi-definite matrix ($\mathbf{\Sigma} \succeq 0$), and let $\mathbf{V} \in \mathbb{R}^{4 \times 4}$ be the Galilean shearing matrix. The transformed covariance matrix $\mathbf{\Sigma}' = \mathbf{V}\mathbf{\Sigma}\mathbf{V}^\top$ remains symmetric and positive semi-definite.
\end{lemma}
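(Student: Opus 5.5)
The plan is to verify the two properties separately, directly from the definition $\mathbf{\Sigma}' = \mathbf{V}\mathbf{\Sigma}\mathbf{V}^\top$. No property of $\mathbf{V}$ beyond its being a real $4\times 4$ matrix is needed. Its specific shearing form (and $\det\mathbf{V}=1$) only matters if we want strict positive definiteness to be preserved as well.

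\textbf{Symmetry.} Take the transpose and use $(\mathbf{ABC})^\top = \mathbf{C}^\top\mathbf{B}^\top\mathbf{A}^\top$ together with $\mathbf{\Sigma}^\top=\mathbf{\Sigma}$. This gives $(\mathbf{V}\mathbf{\Sigma}\mathbf{V}^\top)^\top = \mathbf{V}\mathbf{\Sigma}^\top\mathbf{V}^\top = \mathbf{\Sigma}'$. The explicit block entries computed in the proof of \cref{thm:schur_invariance} confirm the same thing: the off-diagonal blocks $\mathbf{\Sigma}_{xt}+\mathbf{v}\Sigma_{tt}$ and $\mathbf{\Sigma}_{tx}+\Sigma_{tt}\mathbf{v}^\top$ are transposes of each other, and the spatial block is manifestly symmetric.

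\textbf{Positive semi-definiteness.} Fix an arbitrary $\mathbf{y}\in\mathbb{R}^4$ and set $\mathbf{w}=\mathbf{V}^\top\mathbf{y}$. Then
\begin{equation*}
\mathbf{y}^\top\mathbf{\Sigma}'\mathbf{y} = (\mathbf{V}^\top\mathbf{y})^\top\mathbf{\Sigma}(\mathbf{V}^\top\mathbf{y}) = \mathbf{w}^\top\mathbf{\Sigma}\mathbf{w} \ge 0,
\end{equation*}
because $\mathbf{\Sigma}\succeq 0$. Since $\mathbf{y}$ was arbitrary, $\mathbf{\Sigma}'\succeq 0$.

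As a complementary remark, I would note that $\mathbf{V}$ is invertible, with $\det\mathbf{V}=1$ and $\mathbf{V}^{-1}$ equal to the shear with $-\mathbf{v}$. Hence $\mathbf{w}=0$ only when $\mathbf{y}=0$, so strict positive definiteness of $\mathbf{\Sigma}=\mathbf{R}\mathbf{S}\mathbf{S}^\top\mathbf{R}^\top$ (nonzero scales) also carries over. In particular $\Sigma'_{tt}=\Sigma_{tt}>0$, which keeps the conditioning in \cref{eqMiuSigVt} well defined.

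An alternative route, which I would mention briefly, is the factorization $\mathbf{\Sigma}' = (\mathbf{V}\mathbf{R}\mathbf{S})(\mathbf{V}\mathbf{R}\mathbf{S})^\top$. This is a Gram form, which is automatically SPSD.

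There is no real obstacle here. The only point needing care is that the time-varying $\mathbf{v}(t)$ makes $\mathbf{V}$ depend on $t$. The argument above holds pointwise for each fixed $t$, however, so the conclusion holds for every query time.
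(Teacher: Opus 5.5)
Your proof is correct and matches the paper's argument: symmetry follows from $(\mathbf{V}\mathbf{\Sigma}\mathbf{V}^\top)^\top = \mathbf{V}\mathbf{\Sigma}^\top\mathbf{V}^\top$, and positive semi-definiteness from substituting $\mathbf{z} = \mathbf{V}^\top\mathbf{y}$ into the quadratic form. Your remark that strict positive definiteness carries over because $\det\mathbf{V} = 1$ is the same as the paper's Corollary \ref{cor:positive_definite}.
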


\begin{proof}
We verify these properties separately:

\textbf{Symmetry.}
By definition, $\mathbf{\Sigma}$ is symmetric, so $\mathbf{\Sigma} = \mathbf{\Sigma}^\top$. Taking the transpose of the transformed matrix $\mathbf{\Sigma}'$:
\begin{align}
(\mathbf{\Sigma}')^\top &= (\mathbf{V}\mathbf{\Sigma}\mathbf{V}^\top)^\top \nonumber \\
&= (\mathbf{V}^\top)^\top \mathbf{\Sigma}^\top \mathbf{V}^\top \nonumber \\
&= \mathbf{V} \mathbf{\Sigma} \mathbf{V}^\top \nonumber \\
&= \mathbf{\Sigma}'.
\end{align}
Thus, $\mathbf{\Sigma}'$ is symmetric.

\textbf{Positive Semi-Definiteness.}
By definition, $\mathbf{\Sigma} \succeq 0$ implies that for any non-zero vector $\mathbf{x} \in \mathbb{R}^4$, the quadratic form satisfies $\mathbf{x}^\top \mathbf{\Sigma} \mathbf{x} \ge 0$.
Consider the quadratic form of the transformed matrix $\mathbf{\Sigma}'$ with respect to an arbitrary vector $\mathbf{y} \in \mathbb{R}^4$:
\begin{equation}
Q(\mathbf{y}) = \mathbf{y}^\top \mathbf{\Sigma}' \mathbf{y} = \mathbf{y}^\top (\mathbf{V}\mathbf{\Sigma}\mathbf{V}^\top) \mathbf{y}.
\end{equation}
Using the associative property of matrix multiplication, we can regroup the terms:
\begin{equation}
Q(\mathbf{y}) = (\mathbf{y}^\top \mathbf{V}) \mathbf{\Sigma} (\mathbf{V}^\top \mathbf{y}).
\end{equation}
Let $\mathbf{z} = \mathbf{V}^\top \mathbf{y}$. The equation becomes:
\begin{equation}
Q(\mathbf{y}) = \mathbf{z}^\top \mathbf{\Sigma} \mathbf{z}.
\end{equation}
Since $\mathbf{\Sigma}$ is positive semi-definite, $\mathbf{z}^\top \mathbf{\Sigma} \mathbf{z} \ge 0$ holds for any vector $\mathbf{z}$. Consequently, $\mathbf{y}^\top \mathbf{\Sigma}' \mathbf{y} \ge 0$ for all $\mathbf{y}$.

Therefore, $\mathbf{\Sigma}'$ is positive semi-definite.
\end{proof}

\begin{corollary}
[Preservation of Positive Definiteness]
\label{cor:positive_definite}
Furthermore, if $\mathbf{\Sigma}$ is strictly positive definite ($\mathbf{\Sigma} \succ 0$) and $\mathbf{V}$ is non-singular, then $\mathbf{\Sigma}'$ is also strictly positive definite.
For the Galilean shearing matrix, the determinant is:
\begin{equation}
\det(\mathbf{V}) = 1 \neq 0.
\end{equation}
Since $\mathbf{V}$ is invertible (full rank), the transformation strictly preserves the positive definiteness of the Gaussian covariance.
\end{corollary}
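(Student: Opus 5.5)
The statement immediately above is the Corollary on preservation of positive definiteness. The plan is to upgrade the quadratic-form argument from the preceding Lemma, using invertibility of $\mathbf{V}$ to exclude the degenerate case.

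First, I would record that $\mathbf{V}$ is block upper triangular with identity diagonal blocks $\mathbf{I}_3$ and $1$. Hence $\det(\mathbf{V}) = \det(\mathbf{I}_3)\cdot 1 = 1$. Equivalently, one can exhibit the inverse directly:
\begin{equation*}
\mathbf{V}^{-1} = \begin{pmatrix} \mathbf{I}_3 & -\mathbf{v} \\ \mathbf{0} & 1 \end{pmatrix}.
\end{equation*}
Since $\mathbf{V}$ is invertible, so is $\mathbf{V}^\top$. This holds for every value of $\mathbf{v}$, including the time-varying $\mathbf{v}(t)$ evaluated at any fixed $t$.

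Second, I would take an arbitrary nonzero $\mathbf{y} \in \mathbb{R}^4$ and set $\mathbf{z} = \mathbf{V}^\top \mathbf{y}$, as in the Lemma's proof. Because $\mathbf{V}^\top$ has trivial kernel, $\mathbf{z} \neq \mathbf{0}$. Strict positive definiteness of $\mathbf{\Sigma}$ then gives
\begin{equation*}
\mathbf{y}^\top \mathbf{\Sigma}' \mathbf{y} = \mathbf{z}^\top \mathbf{\Sigma} \mathbf{z} > 0.
\end{equation*}
Symmetry of $\mathbf{\Sigma}'$ is already provided by the Lemma, so $\mathbf{\Sigma}' \succ 0$.

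As a side remark, $\Sigma'_{tt} = \Sigma_{tt} > 0$, which shows that the inverse used in the conditional formulas and in \cref{thm:schur_invariance} exists.

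The only step needing care is the one on which the corollary actually rests: that $\mathbf{z} \neq \mathbf{0}$, i.e.\ that $\mathbf{V}^\top$ is injective. This reduces to the determinant computation in the first step, so I expect no real obstacle.
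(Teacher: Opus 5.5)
Your proposal is correct and follows the same route as the paper: it combines $\det(\mathbf{V}) = 1$ with the Lemma's substitution $\mathbf{z} = \mathbf{V}^\top \mathbf{y}$. You also state explicitly the step the paper leaves implicit, namely that injectivity of $\mathbf{V}^\top$ forces $\mathbf{z} \neq \mathbf{0}$, which is what lets strict positivity of $\mathbf{z}^\top \mathbf{\Sigma} \mathbf{z}$ apply.
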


\section{Impact Statement}
This paper presents work whose goal is to advance the field of Machine Learning, which focuses on practical novel view synthesis. There are many potential societal consequences of our work, none of which we feel must be specifically highlighted here.
%%%%%%%%%%%%%%%%%%%%%%%%%%%%%%%%%%%%%%%%%%%%%%%%%%%%%%%%%%%%

% \newpage
% \input{checklist.tex}

\end{document}